\documentclass{article}

 \usepackage[preprint]{neurips_2026}
\usepackage{natbib}

\usepackage[utf8]{inputenc} 
\usepackage[T1]{fontenc}    
\usepackage{url}            
\usepackage{booktabs}       
\usepackage{nicefrac}       
\usepackage{microtype}      
\usepackage{xcolor}         
\usepackage{amsmath,amssymb,amsfonts}
\usepackage{algorithmic}
\usepackage{graphicx}
\usepackage{textcomp}
\usepackage{algorithm}
 \usepackage{subcaption}
 \usepackage{float}
 \usepackage{comment}
 \usepackage{tgpagella}
\usepackage[colorlinks,urlcolor=blue,linkcolor=blue,citecolor=blue]{hyperref}
\usepackage{amsthm}

\newtheorem{proposition}{Proposition}

\def\BibTeX{{\rm B\kern-.05em{\sc i\kern-.025em b}\kern-.08em
    T\kern-.1667em\lower.7ex\hbox{E}\kern-.125emX}}

\title{SAGE: Surrogate-gradient Adaptation via Attention-Guided Entropy for Spiking Transformers}

\author{
Kiran Nair\thanks{Corresponding author: \texttt{kiran.prasannannair@coyotes.usd.edu}}\\
USD Artificial Intelligence Research Lab\\
Department of Computer Science\\
University of South Dakota\\
Vermillion, SD 57069, USA
\And
Rodrigue Rizk\\
USD Artificial Intelligence Research Lab\\
Department of Computer Science\\
University of South Dakota\\
Vermillion, SD 57069, USA
\And
KC Santosh\\
USD Artificial Intelligence Research Lab\\
Department of Computer Science\\
University of South Dakota\\
Vermillion, SD 57069, USA
}

\begin{document}
\maketitle

\begin{abstract}
Spiking neural networks (SNNs) offer an energy-efficient alternative to conventional deep neural networks by exploiting sparse event-driven computation, but their training remains challenging because the non-differentiable spike function requires surrogate gradients whose fixed shape may be suboptimal across layers and training stages. In this work, we introduce SAGE, an uncertainty-modulated surrogate-gradient mechanism for Transformer-based SNNs. SAGE estimates block-level uncertainty from normalized self-attention entropy and uses this signal to adapt the surrogate-gradient slope during training while leaving the inference model unchanged. By modulating only the training-time surrogate parameter, the proposed method preserves the original architecture and deployment cost while improving optimization flexibility. Experiments on CIFAR-10/100 demonstrate that SAGE achieves improved accuracy over fixed-surrogate baselines, with results up to 1-2\% consistent gains across multiple simulation time steps. These results highlight the potential of attention-derived uncertainty as a lightweight training signal for adaptive surrogate-gradient learning in transformer-based SNNs.
\end{abstract}

\section{Introduction}
Spiking Neural Networks (SNNs), recognized as the third generation of neural network models, have emerged as an energy-efficient, event-driven paradigm for deep learning \cite{MAASS19971659, roy2019towards}. By replacing continuous floating-point activations with discrete binary spikes ($0$ or $1$), SNNs running on neuromorphic hardware replace resource-heavy Multiply-Accumulate operations with sparse, addition-only (AC) operations, dramatically reducing computational energy consumption \cite{horowitz20141, kundu2021hire}. Building upon these biologically plausible properties, recent architectural advances have successfully introduced self-attention mechanisms into SNNs. Most notably, the Spikformer architecture \cite{zhou2022spikformer} established Spiking Self-Attention (SSA), which eliminates softmax normalizations to compute sparse, AC attention maps across spike-form Query, Key, and Value tensors. This breakthrough demonstrated that Spiking Vision Transformers (ViTs) can achieve competitive accuracy on large-scale visual benchmarks while maintaining ultra-low hardware energy profiles.

Despite their efficient forward computation, training deep Spiking ViTs remains a fundamental challenge. Since the Heaviside spike-generation function is non-differentiable, Backpropagation Through Time (BPTT) cannot be applied directly. Instead, modern SNN training relies on surrogate gradients that approximate the derivative during backward propagation \cite{lee2016training, neftci2019surrogate, wu2018spatio, zheng2021going}. However, conventional surrogate gradient functions, such as static arctangent, Fast Sigmoid, or piecewise linear derivatives, apply fixed or globally uniform gradient response windows across all spatial sequence dimensions \cite{bellec2018long, zhou2026advancing}. While recent works have explored temporal-wise learnable surrogates to track time-step dynamics \cite{zhou2026advancing}, existing paradigms treat all spatial tokens identically during backpropagation, ignoring the complex, token-level feature dynamics inherent to vision transformer layers.

This spatially uniform gradient approximation creates a critical failure mode in deep Spiking transformers. Visual tokens exhibit vastly different levels of semantic uncertainty: high-confidence tokens (such as distinct object boundaries) require narrow, precise gradient updates, whereas high-uncertainty tokens (such as ambiguous background clutter or complex textures) demand broader, exploratory gradient flow to discover optimal representations. By enforcing a static surrogate window across all tokens, standard backpropagation fails to provide contextual credit assignment. In deep network layers, this token-blind gradient estimation exacerbates severe gradient vanishing and causes attention heads to collapse into ``dead heads", where membrane potentials remain subthreshold and weight updates stall completely \cite{zhou2022spikformer, zhou2026advancing}. Consequently, a major challenge in scaling Spiking ViTs lies in developing an adaptive surrogate mechanism that dynamically tunes gradient flow to local token uncertainty without compromising the deterministic, low-power binary execution of the forward pass.

Motivated by this challenge, we propose SAGE (Surrogate-gradient Adaptation via attention-Guided Entropy), an uncertainty-aware surrogate-gradient framework for Spiking ViTs. Instead of using a fixed or globally learnable surrogate throughout training, SAGE estimates the uncertainty of each transformer block from the dispersion of attention entropy across self-attention heads and adaptively modulates the surrogate-gradient slope during backpropagation. The proposed framework operates exclusively during training, requiring no architectural modifications, additional inference-time parameters, or changes to the forward computation. Extensive experiments on CIFAR-10, CIFAR-100, and ImageNet-200 demonstrate that SAGE consistently improves training effectiveness while preserving the efficiency and deployment characteristics of the original Spikformer. Precisely, our contributions are as follows:
\begin{enumerate}
    \item We introduce the first uncertainty-aware adaptive surrogate-gradient framework for spiking transformers.
    \item We demonstrate that entropy dispersion across attention heads provides a reliable online uncertainty signal for surrogate adaptation and is substantially more discriminative than alternative metrics for characterizing transformer-block uncertainty.
    \item We validate SAGE on competitive datasets, where it consistently outperforms fixed and learnable surrogate baselines by up to 1-2\% Top-1 accuracy while adding only 0.03 ms training overhead per mini-batch.
\end{enumerate}
The remainder of this paper is organized as follows. Section~\ref{sec:motivation} presents the motivation for the proposed framework, establishing the need for uncertainty-aware surrogate adaptation through theoretical insights and supporting literature. Section~\ref{sec:related_work} reviews the existing literature on surrogate-gradient learning and Spiking ViTs. Section~\ref{sec:method} introduces the proposed SAGE framework, detailing the uncertainty estimation strategy and its integration into Spikformer training. Section~\ref{sec:experiments} describes the experimental setup and presents comprehensive evaluations. Section~\ref{sec:discussion} provides a detailed analysis and discussion of the experimental findings, while Section~\ref{sec:conclusion} concludes the paper.

\section{Motivation}
\label{sec:motivation}
Deep SNNs are trained using surrogate gradients, which approximate the derivative of the non-differentiable spike function during backpropagation. Over the past decade, numerous surrogate-gradient formulations have been proposed, differing primarily in their functional forms~\cite{zenke2018superspike}, smoothness characteristics~\cite{shrestha2018slayer}, and gradient scaling strategies~\cite{wu2018spatio}. Despite these differences, existing methods employ a fixed surrogate response for every neuron, treating all spike events identically regardless of the confidence of the underlying representation. This assumption contrasts with modern learning paradigms, where intermediate representations contribute unequally to downstream predictions~\cite{vaswani2017attention}, and uncertainty estimation is widely used to identify ambiguous or unreliable features~\cite{gawlikowski2023survey}. Consequently, applying identical surrogate gradients to neurons encoding both highly certain and highly uncertain information may lead to suboptimal credit assignment during optimization~\cite{neftci2019surrogate}.

These observations naturally raise the following question: \textit{should surrogate gradients remain uniform when the underlying representations exhibit different levels of uncertainty?} We hypothesize that surrogate gradient modulation should instead reflect the uncertainty of the encoded representation. Intuitively, confident representations require less surrogate smoothing, whereas uncertain representations may benefit from broader surrogate support to maintain effective gradient propagation. This intuition is consistent with the broader principles of adaptive computation and uncertainty-aware learning in modern deep networks~\cite{vaswani2017attention,gawlikowski2023survey}. Motivated by this observation, we propose an uncertainty-modulated surrogate gradient that dynamically adapts the backward surrogate response using attention-derived uncertainty while preserving the deterministic forward dynamics of the Spiking ViTs. The proposed formulation is analyzed theoretically in Proposition~\ref{prop:monotonic}, which establishes the monotonic relationship between uncertainty and the surrogate gradient magnitude.

\begin{proposition}
\label{prop:monotonic}
Let $z_c$ denote the centered normalized entropy-dispersion statistic for a transformer block. The adaptive surrogate-gradient slope is defined as
\begin{equation}
\alpha(z_c)=
\begin{cases}
\alpha_0, & |z_c|<\delta,\\
\alpha_0+\beta\tanh(z_c), & |z_c|\ge\delta,
\end{cases}
\end{equation}
where $\alpha_0>0$, $\beta>0$, and $\delta>0$. Then, outside the dead-zone region, $\alpha(z_c)$ is a monotonically increasing function of $z_c$.
\end{proposition}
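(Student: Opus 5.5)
The argument reduces to the strict monotonicity of $\tanh$, together with care about what ``outside the dead zone'' means. That region is $D=\{z_c:|z_c|\ge\delta\}=(-\infty,-\delta]\cup[\delta,\infty)$, and on $D$ we have $\alpha(z_c)=\alpha_0+\beta\tanh(z_c)$. I would first show that this expression is strictly increasing on all of $\mathbb{R}$. Its derivative is $\beta\,\mathrm{sech}^2(z_c)=\beta\bigl(1-\tanh^2(z_c)\bigr)$. Since $\beta>0$ and $\mathrm{sech}^2(z_c)>0$ for every real $z_c$, the derivative is strictly positive. The mean value theorem then gives $\alpha_0+\beta\tanh(z_1)<\alpha_0+\beta\tanh(z_2)$ whenever $z_1<z_2$. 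Restricting to $D$, for any $z_1<z_2$ in $D$ the definition uses the $\tanh$ branch at both points, so $\alpha(z_1)<\alpha(z_2)$.

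The one point needing attention is that $D$ is not connected, so a derivative argument applied piece by piece would only give monotonicity on each half-line separately. I avoid this by comparing values of the single globally increasing function $\alpha_0+\beta\tanh(\cdot)$, which covers pairs with $z_1\le-\delta<\delta\le z_2$ as well. For completeness I would also record the bounds that follow from these facts. On $[\delta,\infty)$ the slope satisfies $\alpha(z_c)\ge\alpha_0+\beta\tanh\delta>\alpha_0$, and on $(-\infty,-\delta]$ it satisfies $\alpha(z_c)\le\alpha_0-\beta\tanh\delta<\alpha_0$. Uncertain blocks, with $z_c\ge\delta$, therefore receive a larger slope than the baseline, and confident blocks, with $z_c\le-\delta$, receive a smaller one.

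The proposition claims nothing about behaviour across the dead zone, and indeed $\alpha$ jumps at $\pm\delta$ while staying ordered. On the whole line $\alpha$ is in fact non-decreasing, since $\alpha(z)\le\alpha_0-\beta\tanh\delta<\alpha_0$ for $z\le-\delta$ and $\alpha(z)\ge\alpha_0+\beta\tanh\delta>\alpha_0$ for $z\ge\delta$. I would add this as a one-line remark.

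There is no real obstacle here. The only subtlety is to state monotonicity over the disconnected set $D$ through the comparison of values above, rather than through a derivative on each piece.
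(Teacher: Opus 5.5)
Your proof is correct and takes essentially the same route as the paper: both rest on $\frac{d\alpha}{dz_c}=\beta\left(1-\tanh^2(z_c)\right)>0$ on the $\tanh$ branch. You also first show that $\alpha_0+\beta\tanh(\cdot)$ is strictly increasing on all of $\mathbb{R}$ and only then restrict to the disconnected set $\{|z_c|\ge\delta\}$, which makes explicit the comparison across the dead zone that the paper's one-line derivative argument leaves implicit.
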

\begin{proof}
For $|z_c|\ge\delta$,
\begin{equation}
\frac{d\alpha}{dz_c}=\beta\left(1-\tanh^2(z_c)\right).
\end{equation}
Since $\beta>0$ and\(1-\tanh^2(z_c)>0\)
for all finite $z_c$,
\begin{equation}
\frac{d\alpha}{dz_c}>0.
\end{equation}
Therefore, $\alpha$ is strictly monotonic with respect to the centered uncertainty statistic outside the dead-zone region. Within the dead zone, $\alpha=\alpha_0$ is constant, intentionally suppressing small uncertainty fluctuations.
\end{proof}

\section{Related Works}
\label{sec:related_work}
This section reviews the three research directions most closely related to the proposed SAGE framework. We first summarize advances in direct training of Spiking Neural Networks, where surrogate gradients have become the standard approach for optimizing deep SNNs. We then discuss the evolution of Spiking ViTs, highlighting recent architectural developments built upon self-attention mechanisms. Finally, we review existing surrogate-gradient optimization strategies and their limitations, thereby motivating the need for an uncertainty-guided adaptive surrogate framework for spiking transformers.
\paragraph{Spiking Neural Networks \& Direct Training} SNNs represent a biologically inspired, event-driven paradigm that achieves significant energy efficiency by transmitting discrete binary spikes rather than continuous floating-point activations \cite{ghosh2009spiking, tavanaei2019deep}. Early efforts to scale SNNs relied on Artificial Neural Network (ANN)-to-SNN conversion, which typically incurred excessive latency and high inference time steps \cite{han2020rmp}. To overcome these latency bottlenecks, direct training via Spatio-Temporal Backpropagation (STBP) and BPTT emerged as the dominant framework for optimizing deep architectures \cite{lee2016training, wu2018spatio}. Direct training algorithms substitute the non-differentiable Heaviside step function with smooth Surrogate Gradients during backward propagation, enabling competitive low-latency execution \cite{neftci2019surrogate}. Recent advances have expanded direct SNN training across diverse domains, including dynamic spiking graph networks \cite{yin2024dynamic}, asynchronous event-based vision processors \cite{zeng2025leveraging}, and attention-guided spiking architectures \cite{yao2023attention}. Nevertheless, a fundamental limitation persists: direct training algorithms rely heavily on static or globally uniform surrogate derivative functions \cite{neftci2019surrogate, zhou2026advancing}, which fail to accommodate localized spatial variations in complex feature representations.
\paragraph{Spiking Vision Transformers}
To combine the representation capability of self-attention \cite{vaswani2017attention} with the low-power characteristics of neuromorphic processing, recent studies have adapted ViTs to spiking domain operations. The seminal Spikformer architecture \cite{zhou2022spikformer} introduced SSA, which eliminates softmax normalizations to compute sparse, AC attention maps across spike-form Query, Key, and Value tensors. Building on this foundation, subsequent work introduced multiscale spiking ViTs \cite{yu2024spikingvit}, hybrid event-based detection models \cite{xu2025hybrid}, biological visual mechanisms like saccadic attention \cite{wang2025spiking}, and structural bridges between ResNets and ViTs \cite{shi2024spikingresformer}. Additionally, gating mechanisms \cite{Nair_2026_CVPR} and dynamic time-step allocation frameworks \cite{datta2025dynamic} have been explored to enhance information flow control and reduce latency. However, while these architectures optimize forward-pass execution and structural feature routing, their backward optimization remains constrained by static surrogate functions \cite{zhou2022spikformer}, leaving deep spiking transformer layers susceptible to vanishing gradients and dead attention heads.
\paragraph{Surrogate Gradient Optimization}
Overcoming the non-differentiability of spiking activations has motivated extensive research into surrogate gradient formulation. Conventional direct training paradigms employ fixed analytical derivatives, such as Fast Sigmoid, arctangent, or piecewise linear functions \cite{bellec2018long}. To mitigate gradient vanishing and mismatch, researchers have developed adaptive and learnable surrogate functions \cite{perez2021sparse, che2022differentiable, guo2024take}, including adaptive smoothing gradient learning \cite{wang2023adaptive}, cross-layer threshold adaptations \cite{ai2025cross}, and dual-stage threshold-gradient optimization \cite{jiang2026ds}. Recent extensions have further explored lightweight adaptive surrogates \cite{hou2026adaptive}, membrane-potential-driven gradient scaling \cite{jiang2025adaptive, sun2026optimization}, sparse low-activity firing constraints \cite{stanojevic2024high}, and adaptive surrogates for sequential reinforcement learning \cite{van2026adaptive}. Although these approaches dynamically modulate gradient shapes, they primarily adapt across global temporal dimensions or layer-wide statistics \cite{jiang2025adaptive, zhou2026advancing}. Inspired by uncertainty-guided learning \cite{qiao2021uncertainty, lai2024uncertainty}, confidence-aware optimization \cite{moon2020confidence, lv2026confidence}, and causal entropy principles \cite{branchini2023causal, cheng2026group}, there remains an unmapped frontier: formulating an uncertainty-modulated, token-level surrogate gradient that dynamically tunes its derivative window based on spatial attention entropy during backpropagation.
\begin{figure*}[!tbp]
    \centering
    \includegraphics[width=\textwidth]{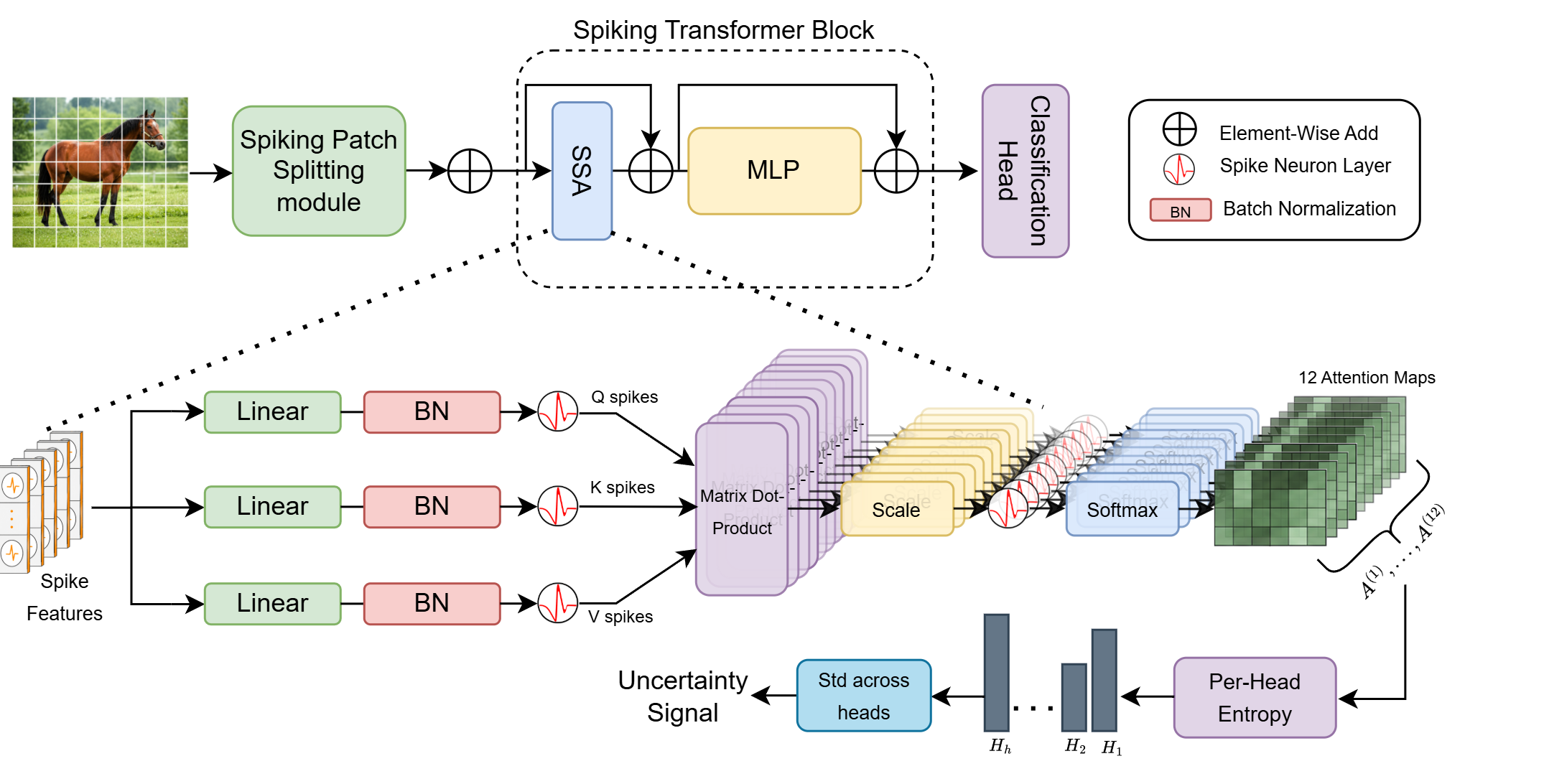}
    \caption{
    Overview of the proposed \textbf{SAGE} (\textbf{S}urrogate-gradient \textbf{A}daptation via attention-\textbf{G}uided \textbf{E}ntropy) framework. 
    \textbf{Top:} Standard Spikformer architecture consisting of a spiking patch splitting module, SSA, Multilayer Perceptron(MLP) blocks, and a classification head. 
    \textbf{Bottom:} Detailed view of the SSA module. Multi-head attention (MHA) maps are generated from the spike-based query ($Q$), key ($K$), and value ($V$) representations. Per-head attention entropy is computed from each attention map, and the standard deviation across attention heads is used as an uncertainty signal to guide adaptive surrogate-gradient modulation during training. The forward inference path remains unchanged, while the uncertainty-guided controller is active only during backpropagation.
    }
    \label{fig:architecture}
\end{figure*}

\section{Method}
\label{sec:method}
\subsection{Overview of SAGE}
Figure~\ref{fig:architecture} presents an overview of the proposed \textbf{SAGE} (\textbf{S}urrogate-gradient \textbf{A}daptation via attention-\textbf{G}uided \textbf{E}ntropy) framework for adaptive surrogate-gradient optimization in spiking transformers. Built upon the standard Spikformer architecture~\cite{zhou2022spikformer}, SAGE preserves the original forward inference pipeline and introduces no modifications to the network architecture or inference procedure. Instead, the proposed framework operates exclusively during training by adaptively modulating the surrogate gradient used for backpropagation. The central idea is that the multi-head SSA module naturally produces attention distributions whose variability reflects the model's uncertainty during optimization. Rather than applying a fixed surrogate gradient uniformly across all transformer blocks throughout training, SAGE derives an uncertainty signal from the attention maps of each block and uses it to dynamically adjust the surrogate gradient parameter, enabling different blocks to receive optimization behavior that is consistent with their current uncertainty.

Importantly, SAGE is a training-time optimization framework and does not alter the inference process. During inference, both the uncertainty estimation module and the adaptive controller are removed, leaving the original Spikformer architecture unchanged. Consequently, SAGE preserves the inference graph, computational complexity, and latency of the baseline model while remaining readily applicable to existing spiking transformer architectures without introducing additional inference overhead.

\subsection{Attention Uncertainty Estimation}
\label{subsec:attention_uncertainity}
To derive a training signal that reflects the confidence of the network, SAGE exploits the attention distributions naturally produced by the SSA module, as illustrated in Figure~\ref{fig:architecture}. Given the query and key spike representations, SSA module computes the raw attention scores as
\begin{equation}
\mathbf{S}=s\,\mathbf{Q}\mathbf{K}^{\top},
\label{eq:attention_score}
\end{equation}
where $s=0.125$ is the fixed scaling factor used in the Spikformer~\cite{zhou2022spikformer} implementation. These raw attention scores are used directly in the standard SSA forward computation and are not normalized by a softmax operation. For uncertainty estimation only, SAGE constructs an auxiliary probability distribution from the detached attention scores. Specifically, the scores are temperature-scaled and normalized as
\begin{equation}
\mathbf{A}=\mathrm{Softmax}\left(
\frac{\mathrm{detach}(\mathbf{S})}{T_{\mathrm{ent}}}\right),
\label{eq:attention_map}
\end{equation}
where $\mathrm{detach}(\cdot)$ indicates that the attention scores are excluded from gradient computation, ensuring that the uncertainty estimation influences only the surrogate-gradient controller and does not modify the forward attention computation. The entropy temperature is fixed to $T_{\mathrm{ent}}=0.25$, selected based on a one-factor-at-a-time (OFAT) sensitivity analysis over multiple temperature values (see Appendix figure~\ref{fig:temperature_analysis}). The resulting auxiliary distributions $\mathbf{A}^{(1)},\mathbf{A}^{(2)},\ldots,\mathbf{A}^{(H)}$
correspond to the $H$ attention heads and are used exclusively to compute the uncertainty signal for SAGE. For each attention head, we quantify the uncertainty of its attention distribution using the normalized Shannon entropy~\cite{shannon1948mathematical},
\begin{equation}
E_i=-\frac{1}{\log N}\sum_{j=1}^{N}A^{(i)}_j\log A^{(i)}_j,
\label{eq:entropy}
\end{equation}
where $N$ is the number of attention elements within a head and the normalization confines the entropy to the range $[0,1]$. While the mean entropy reflects the overall uncertainty of the attention mechanism, our discussion (Section~\ref{sec:discussion}) shows that the variation of entropy across attention heads is considerably more informative during training. Therefore, SAGE estimates the uncertainty of each transformer block using the dispersion of the per-head entropies,
\begin{equation}
D=\mathrm{Std}\left(E_1,E_2,\ldots,E_H\right),
\label{eq:dispersion}
\end{equation}
where $\mathrm{Std}(\cdot)$ denotes the standard deviation across the $H$ attention heads. A larger value of $D$ indicates greater disagreement among attention heads, suggesting higher uncertainty in the current representation, whereas a smaller value implies more consistent attention patterns. This dispersion statistic serves as the uncertainty signal that drives the adaptive surrogate-gradient controller described in the following subsection.
\begin{table}[t]
\centering
\caption{Comparison of state-of-the-art SNN and Spiking Transformer methods on CIFAR-10 and CIFAR-100.}
\label{tab:cifar_comparison}
\renewcommand{\arraystretch}{1.10}
\setlength{\tabcolsep}{4pt}
\small
\resizebox{\textwidth}{!}{%
\begin{tabular}{l l c c c c}
\toprule
\textbf{Method} & \textbf{Architecture} & \textbf{Params (M)} & \textbf{Timesteps} & \textbf{CIFAR-10 (\%)} & \textbf{CIFAR-100 (\%)} \\
\midrule
Hybrid Training~\cite{rathi2020enabling} & VGG-11 & 9.27 & 125 & 92.22 & 67.87 \\
Diet-SNN~\cite{rathi2020diet} & ResNet-20 & 0.27 & 10/5 & 92.54 & 64.07 \\
STBP-tdBN~\cite{zheng2021going} & ResNet-19 & 12.63 & 4 & 92.92 & 70.86 \\
TET~\cite{deng2022temporal} & ResNet-19 & 12.63 & 4 & 94.44 & 74.47 \\
Spikformer~\cite{zhou2022spikformer} & Spikformer-4-384 & 9.32 & 4 & 95.51 & 78.21 \\
RMP-SNN~\cite{han2020rmp} & VGG-16 & 138.4 & 2048 & 93.63 & 70.93 \\
QCFS~\cite{bu2023optimal} & ResNet-20 & 0.27 & 32 & 93.25 & 69.82 \\
\midrule
\textbf{SAGE (Ours)} & \textbf{Spikformer-4-384} & \textbf{9.32} & \textbf{4} & \textbf{96.32} & \textbf{78.69} \\
\bottomrule
\end{tabular}%
}
\end{table}

\subsection{Uncertainty-Guided Surrogate Modulation}
\label{subsec:adaptive_controller}
Conventional SNNs employ a fixed surrogate-gradient parameter throughout training, resulting in identical gradient characteristics regardless of the optimization state or the confidence of intermediate representations. In contrast, SAGE adapts the surrogate gradient according to the uncertainty signal extracted from the attention mechanism. Let $D_t^{(l)}$ denote the entropy-dispersion estimate of transformer block $l$ at training step $t$, obtained from Eq.~(\ref{eq:dispersion}). Rather than using a fixed surrogate parameter (e.g., $\alpha=4$), it computes a block-specific surrogate parameter that evolves during optimization.
To obtain a stable control signal, the instantaneous dispersion is first smoothed using an exponential moving average (EMA),
\begin{equation}
\bar{D}_t^{(l)}=\beta\bar{D}_{t-1}^{(l)}+(1-\beta)D_t^{(l)},
\label{eq:ema}
\end{equation}
where $\beta$ denotes the EMA decay factor. The running mean $\mu_t^{(l)}$ and standard deviation $\sigma_t^{(l)}$ of the smoothed dispersion are then maintained throughout training. Using these statistics, the current uncertainty is normalized as
\begin{equation}
z_t^{(l)}=\frac{\bar{D}_t^{(l)}-\mu_t^{(l)}}{\sigma_t^{(l)}+\varepsilon},
\label{eq:zscore}
\end{equation}
where $\varepsilon$ is a small constant for numerical stability. The normalized uncertainty is further centered across the $L$ transformer blocks to obtain a relative block-wise uncertainty score,
\begin{equation}
\tilde{z}_t^{(l)}=z_t^{(l)}-\frac{1}{L}\sum_{r=1}^{L}z_t^{(r)},
\label{eq:centered_z}
\end{equation}
where $\tilde{z}_t^{(l)}$ denotes the centered normalized entropy-dispersion statistic of block $l$ at training step $t$. To suppress minor fluctuations around the block-wise mean, SAGE applies a dead-zone controller before modulating the surrogate-gradient slope,
\begin{equation}
\hat{\alpha}_t^{(l)}=\begin{cases}
4.0, & |\tilde{z}_t^{(l)}| < 0.25, \\[4pt]
4.0 + 0.5\tanh\!\left(\tilde{z}_t^{(l)}\right),& \text{otherwise},
\end{cases}
\label{eq:adaptive_alpha_raw}
\end{equation}
followed by
\begin{equation}
\alpha_t^{(l)}=\mathrm{clip}\left(\hat{\alpha}_t^{(l)},\,3,\,5\right).
\label{eq:alpha}
\end{equation}
The centering operation enables SAGE to adapt the surrogate gradient relative to the uncertainty distribution across transformer blocks, while the dead zone prevents small variations from triggering unnecessary updates. The $\tanh(\cdot)$ mapping provides a smooth bounded modulation around the baseline value $\alpha=4$, and the final clipping operation constrains the surrogate slope to the interval $[3,5]$. Consequently, transformer blocks exhibiting higher uncertainty receive a different surrogate gradient than more confident blocks, enabling the optimization process to adapt to the evolving attention dynamics while maintaining stable training. During the initial warm-up stage, the surrogate parameter is fixed at $\alpha=4$ while the controller accumulates reliable running statistics before adaptive modulation begins.

\subsection{Integration into Spikformer Training}
SAGE is designed as a lightweight training-time optimization framework that can be integrated into existing Spikformer implementations with minimal modification. During the forward pass, the network architecture, feature extraction process, and inference computation remain identical to the original Spikformer. The proposed framework simply accesses the MHA maps generated by each SSA module and computes the uncertainty estimate described in section~\ref{subsec:attention_uncertainity}. The resulting uncertainty signal is then passed through the adaptive controller in Section~\ref{subsec:adaptive_controller} to obtain a block-specific surrogate parameter $\alpha$, which is used only during backpropagation. Consequently, SAGE introduces no architectural changes, additional learnable parameters, or inference-time overhead.
\begin{figure*}[tbp]
    \centering

    \begin{subfigure}[t]{0.32\textwidth}
        \centering
        \includegraphics[width=\linewidth]{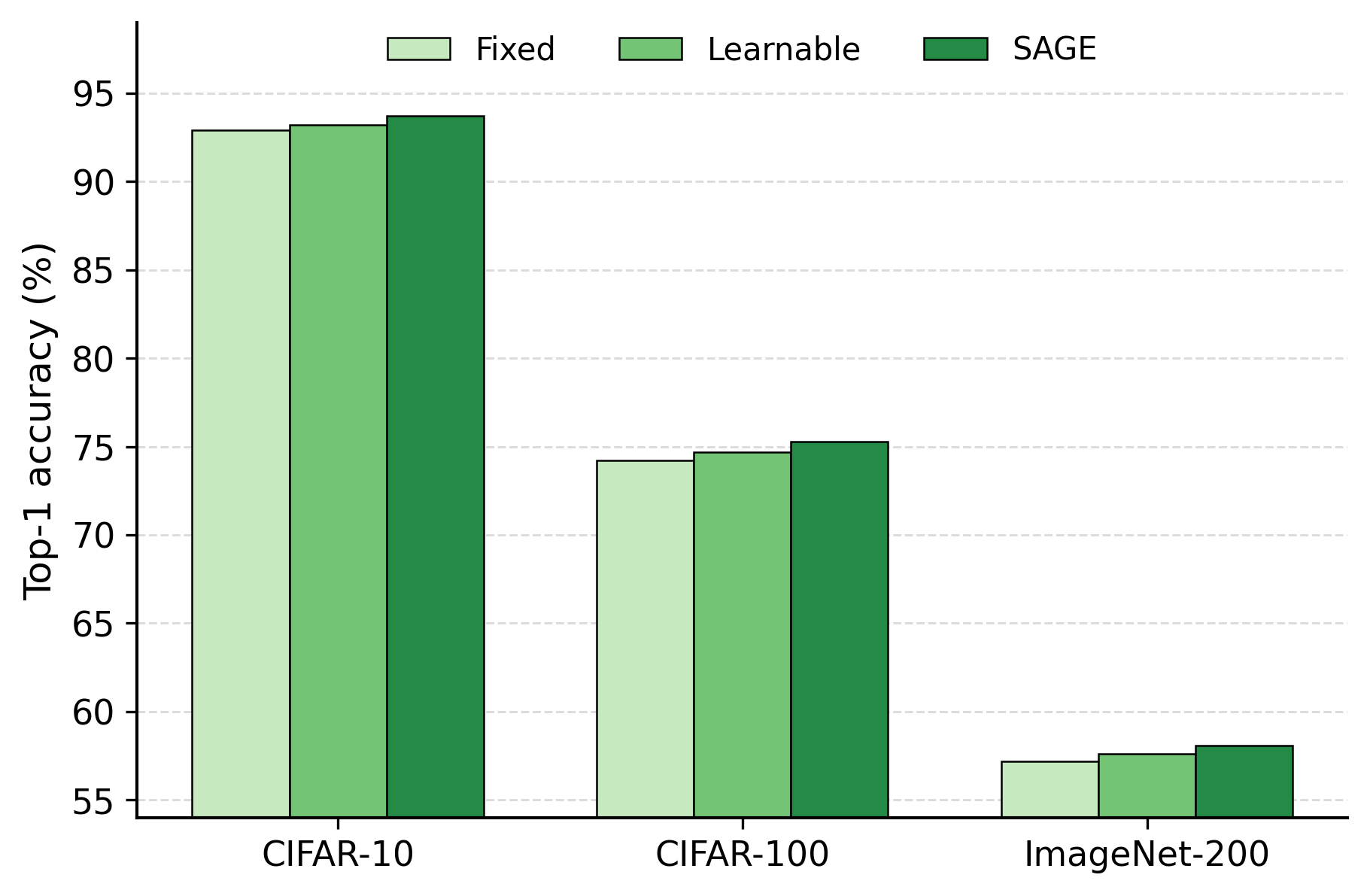}
        \caption{$T=1$}
    \end{subfigure}
    \hfill
    \begin{subfigure}[t]{0.32\textwidth}
        \centering
        \includegraphics[width=\linewidth]{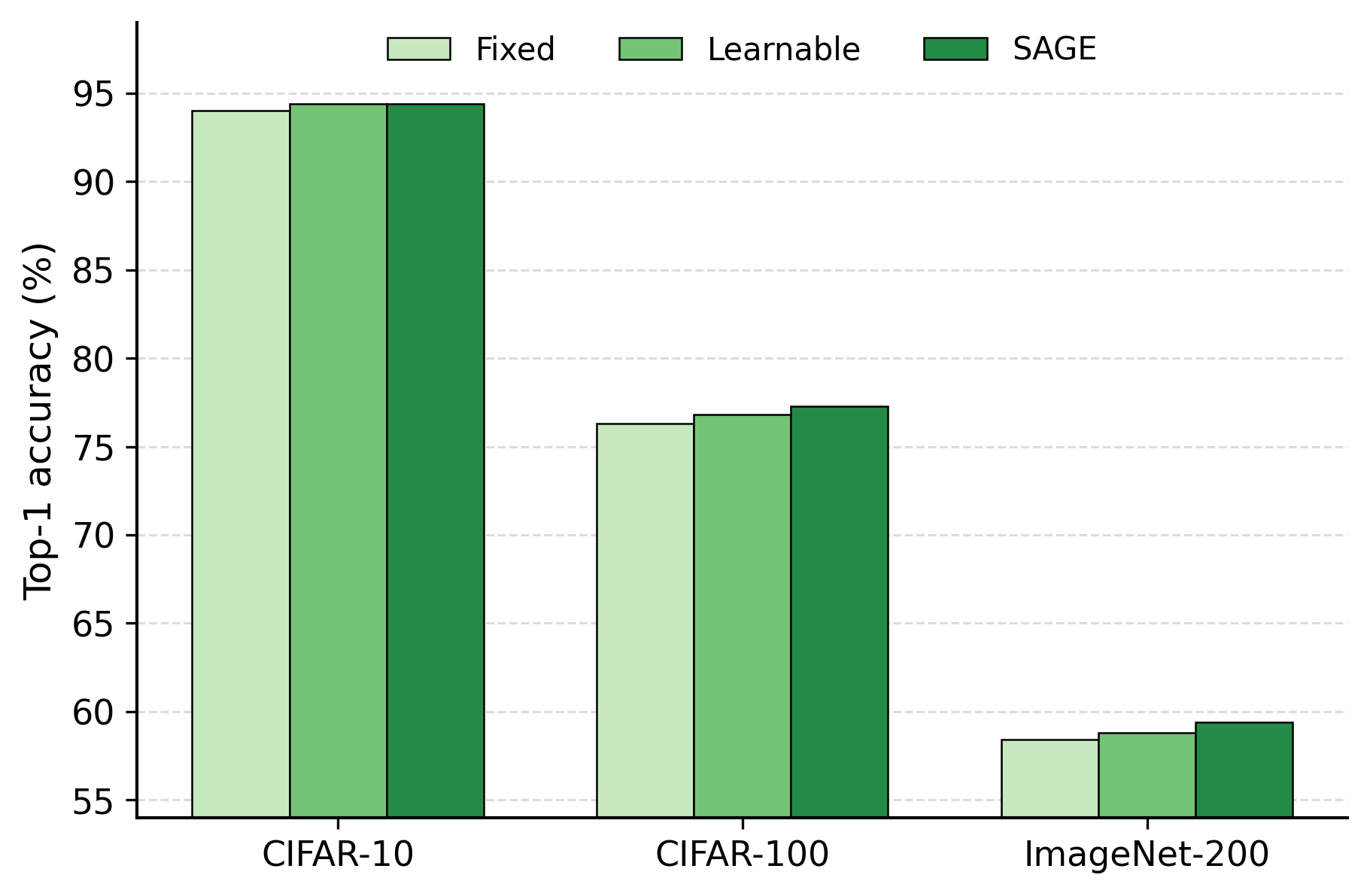}
        \caption{$T=2$}
    \end{subfigure}
    \hfill
    \begin{subfigure}[t]{0.32\textwidth}
        \centering
        \includegraphics[width=\linewidth]{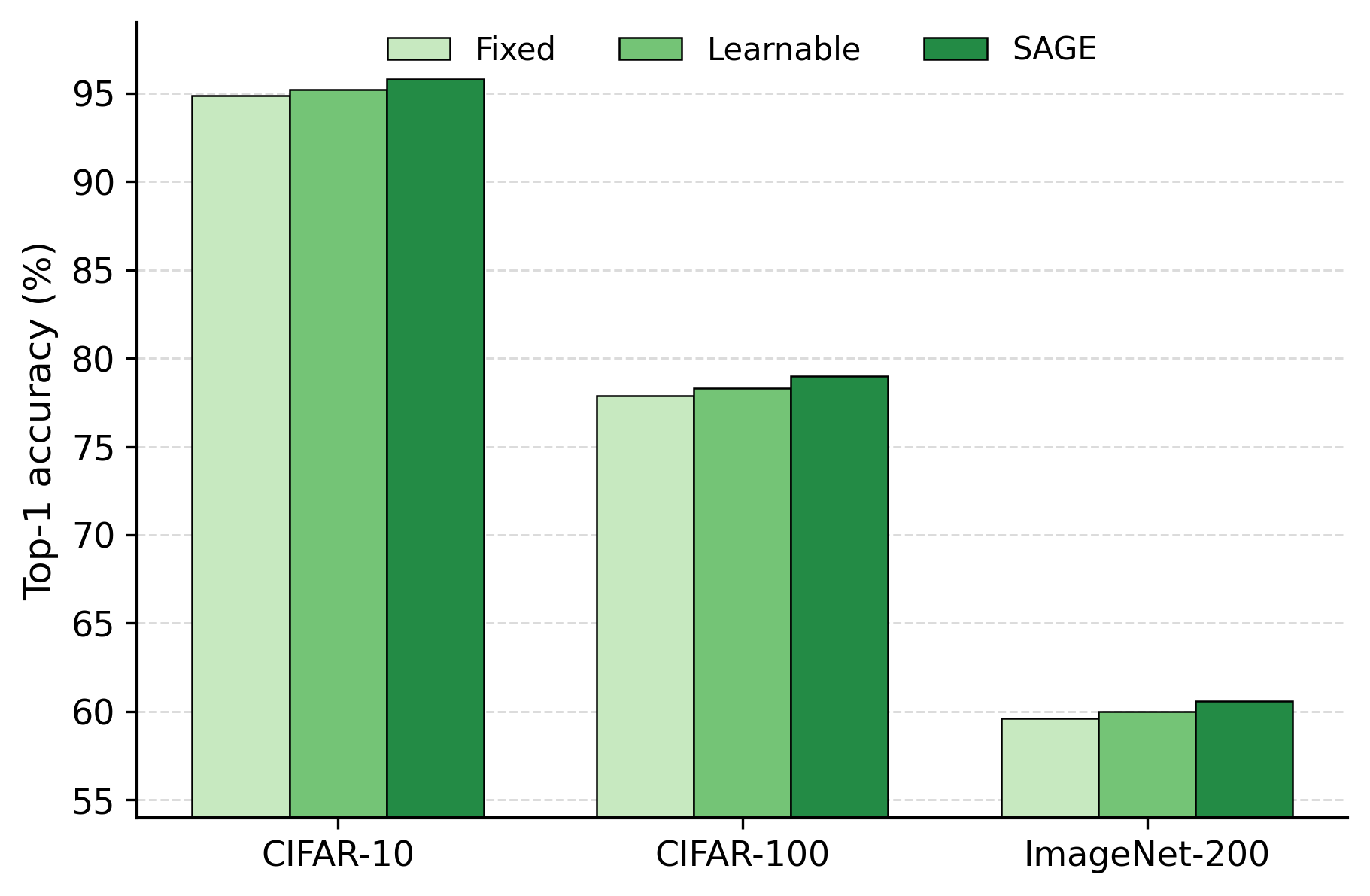}
        \caption{$T=4$}
    \end{subfigure}

    \vspace{0.5em}

    \begin{subfigure}[t]{0.32\textwidth}
        \centering
        \includegraphics[width=\linewidth]{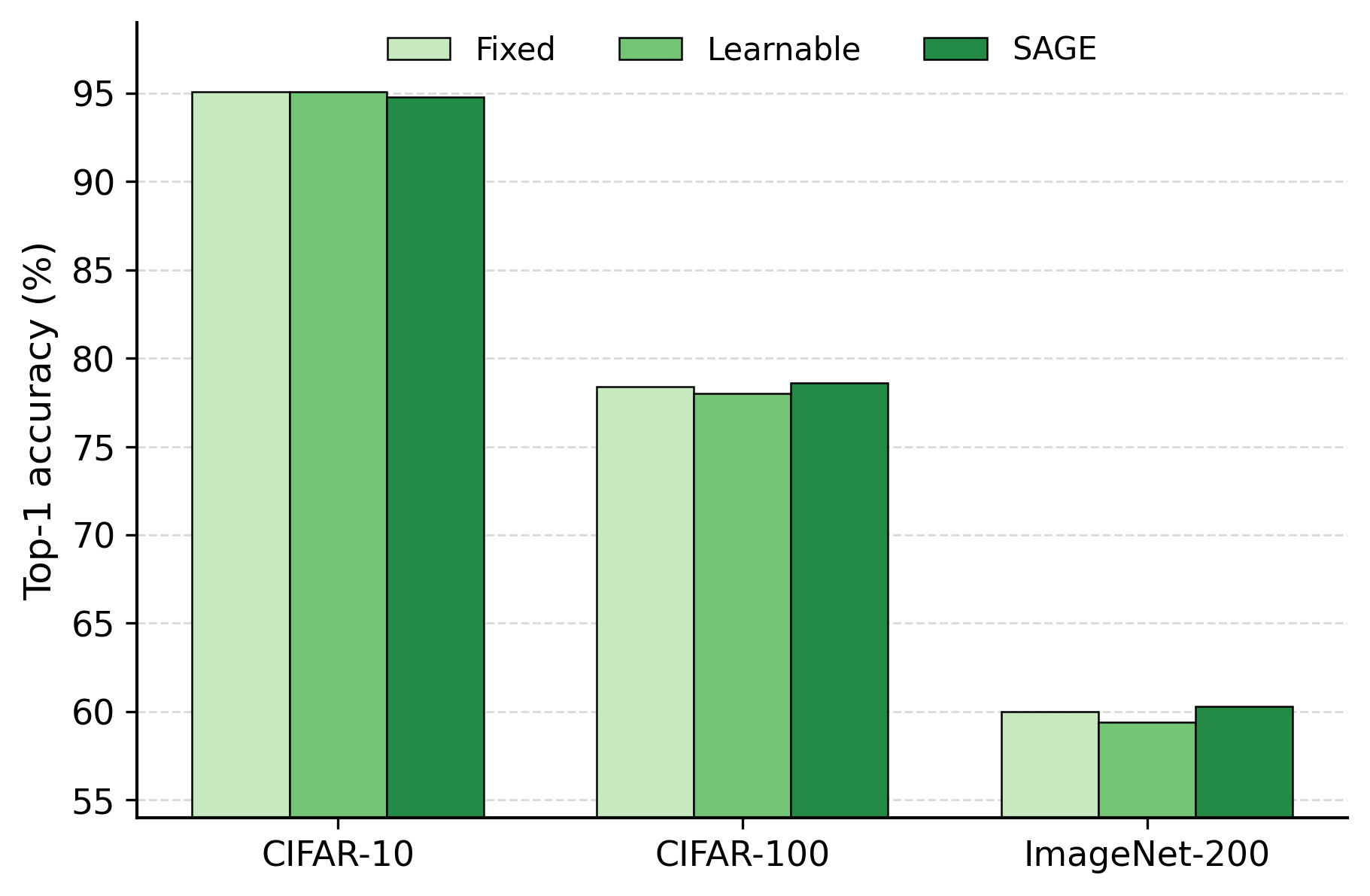}
        \caption{$T=8$}
    \end{subfigure}
    \hspace{0.04\textwidth}
    \begin{subfigure}[t]{0.32\textwidth}
        \centering
        \includegraphics[width=\linewidth]{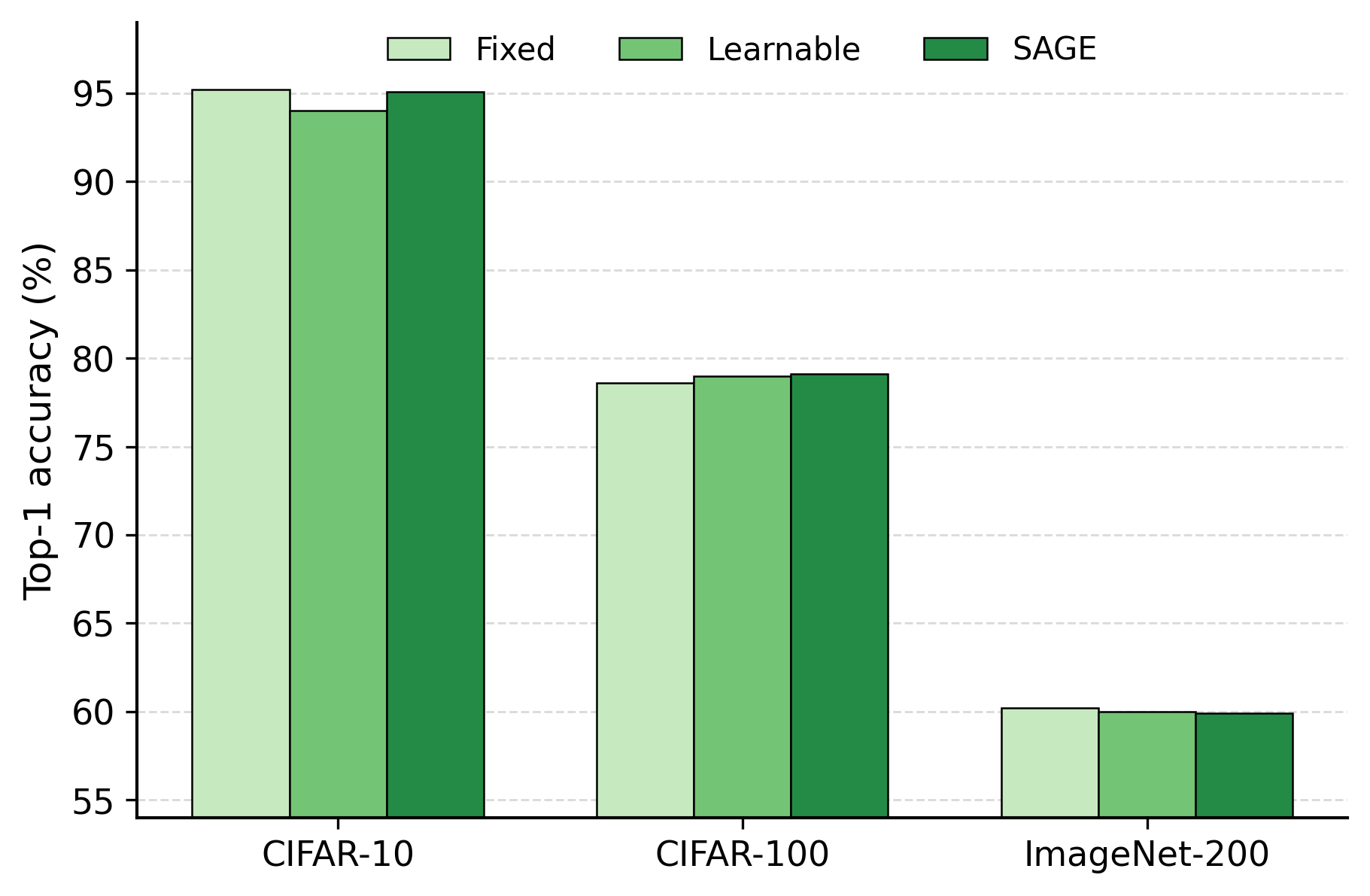}
        \caption{$T=12$}
    \end{subfigure}

    \caption{Top-1 accuracy of fixed, learnable, and SAGE surrogate gradients across datasets and simulation timesteps.}
    \label{fig:surrogate_timestep_comparison}
\end{figure*}

The additional computation introduced by SAGE is negligible compared with the cost of self-attention. For an input sequence of length $N$ and $H$ attention heads, the dominant complexity of the attention mechanism remains the scaled dot-product computation with complexity $\mathcal{O}(N^2)$. In comparison, the proposed uncertainty estimation computes the entropy of each attention head in $\mathcal{O}(HN)$, followed by a standard deviation across heads with complexity $\mathcal{O}(H)$. The subsequent exponential moving average, running statistics, and adaptive controller involve only constant-time operations per transformer block. Since $\mathcal{O}(HN) \ll \mathcal{O}(N^2)$ for practical transformer configurations, the overall computational complexity of Spikformer remains unchanged. Experimental profiling further confirms that the adaptive controller incurs only approximately $0.03$ ms of additional computation per mini-batch during training, while introducing zero overhead during inference.
\section{Experimental Results}
\label{sec:experiments}
We conduct comprehensive experiments to evaluate the effectiveness of the proposed SAGE framework on image classification benchmarks. SAGE is evaluated using the Spikformer backbone and compared with both the original Spikformer baseline and recent state-of-the-art SNNs. In addition, extensive ablation studies are performed to analyze the contribution of each component of the proposed adaptive surrogate-gradient strategy.
\subsection{Experimental Setup}
\subsubsection{Datasets}
Experiments are conducted on three widely used image classification benchmarks: CIFAR-10, CIFAR-100~\cite{krizhevsky2009learning}, and ImageNet-200~\cite{deng2009imagenet}. CIFAR-10 and CIFAR-100 each contain 60,000 RGB images of resolution $32\times32$, divided into 50,000 training and 10,000 testing samples. CIFAR-10 consists of 10 object categories, whereas CIFAR-100 contains 100 fine-grained classes, providing a more challenging classification benchmark due to increased inter-class similarity. ImageNet-200 is a commonly used subset of the ImageNet benchmark containing 200 object categories with approximately 128k training images and 10k validation images, offering substantially greater visual diversity while maintaining manageable computational cost.

\subsubsection{Implementation Details}
\label{subsec:hyperparms}
SAGE is implemented on top of the official Spikformer~\cite{zhou2022spikformer} framework without modifying the network architecture or inference pipeline. Unless otherwise specified, experiments employ the Spikformer-4-384 backbone comprising four transformer blocks, an embedding dimension of 384, 12 attention heads, a patch size of 4, an MLP expansion ratio of 4, and four simulation time steps. Models are trained from scratch following the official Spikformer training protocol, including RandAugment, MixUp, Random Erasing, label smoothing, cosine learning-rate scheduling with warmup, and the AdamW optimizer. All experiments are implemented in PyTorch using the SpikingJelly framework and are conducted on NVIDIA Tesla V100 GPUs.

During training, SAGE computes a block-level uncertainty signal from the normalized attention entropy obtained from temperature-scaled attention maps ($T_{\mathrm{ent}}=0.25$). The entropy dispersion across attention heads is smoothed using an exponential moving average, normalized using running statistics, and mapped to an adaptive surrogate-gradient slope through the controller described in Section~\ref{subsec:adaptive_controller}. The surrogate parameter is initialized with the standard SpikingJelly value ($\alpha=4$) during the warmup stage before adaptive modulation is activated. Importantly, SAGE modifies only the surrogate-gradient computation during training, while the forward computation graph, network parameters, and inference procedure remain identical to the original Spikformer.
\begin{figure*}[tbp]
    \centering

    \includegraphics[width=0.48\textwidth]{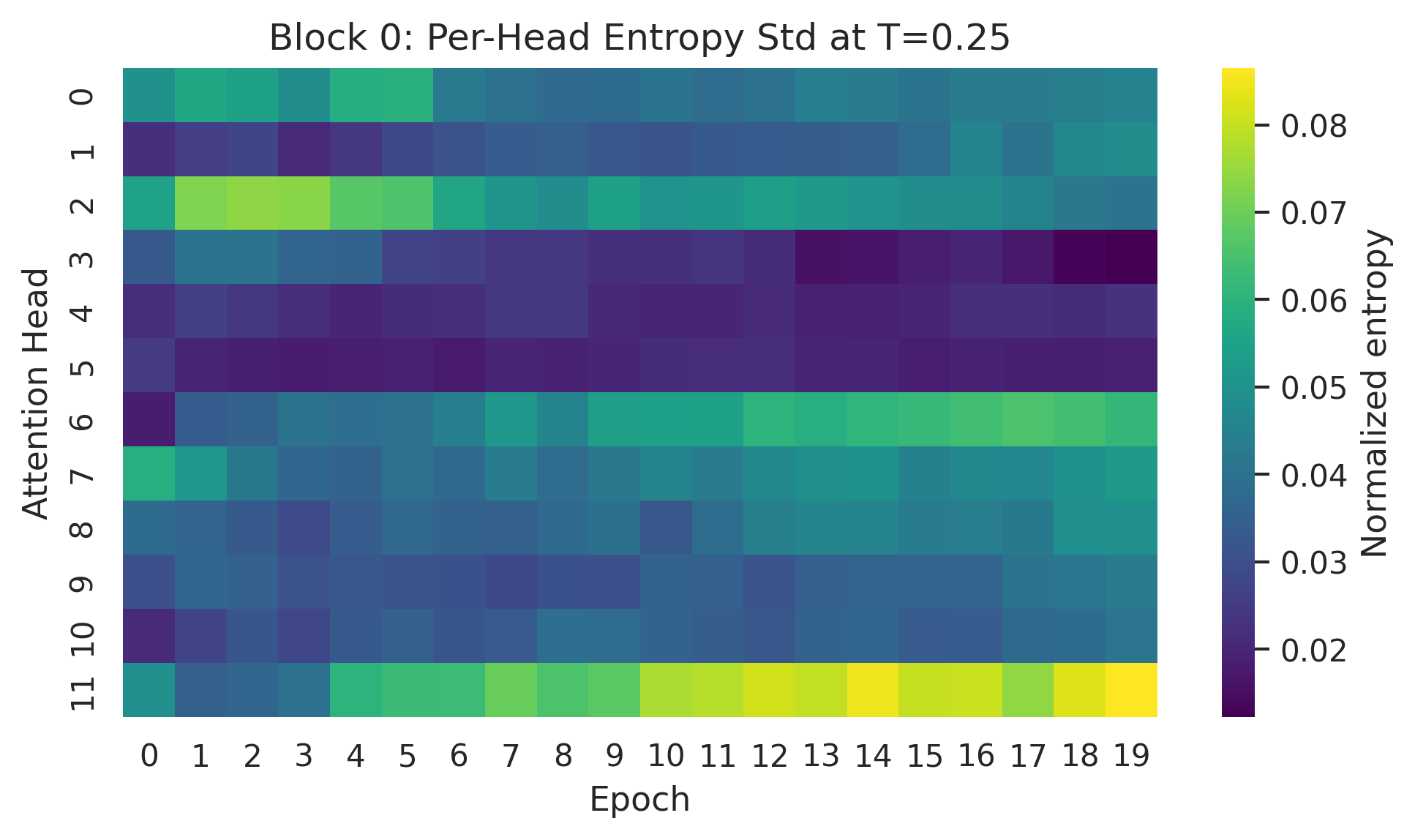}
    \hfill
    \includegraphics[width=0.48\textwidth]{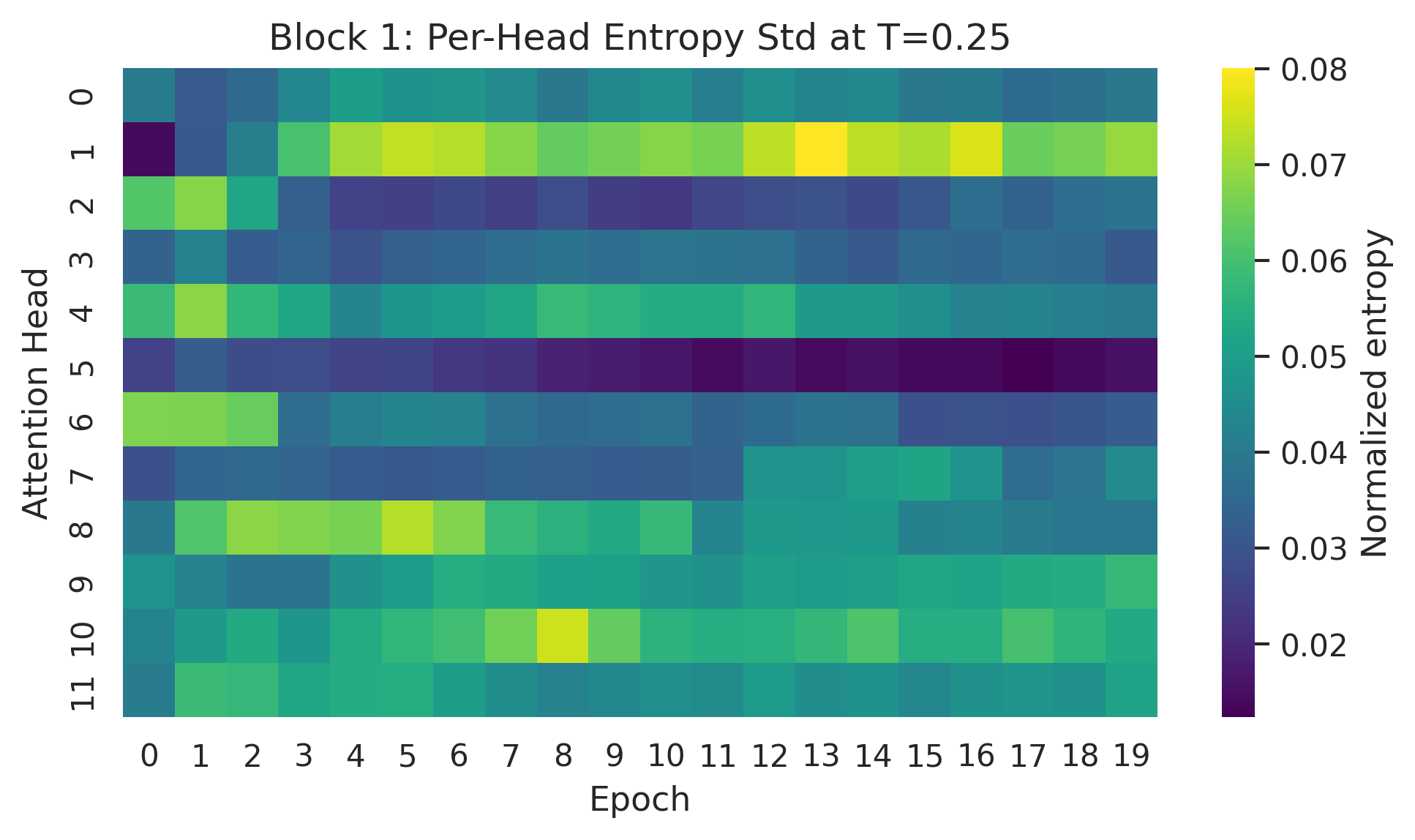}

    \vspace{0.5em}

    \includegraphics[width=0.48\textwidth]{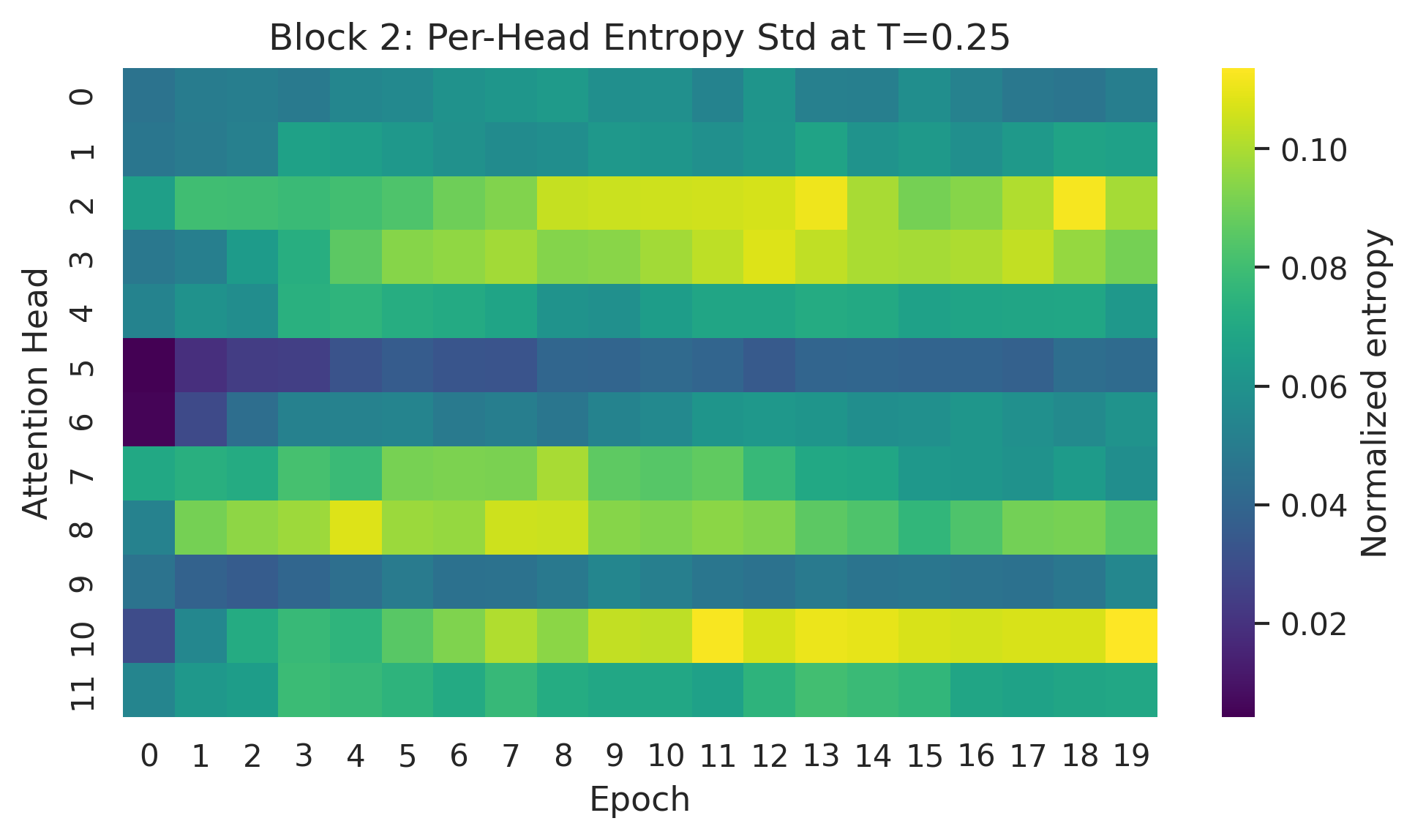}
    \hfill
    \includegraphics[width=0.48\textwidth]{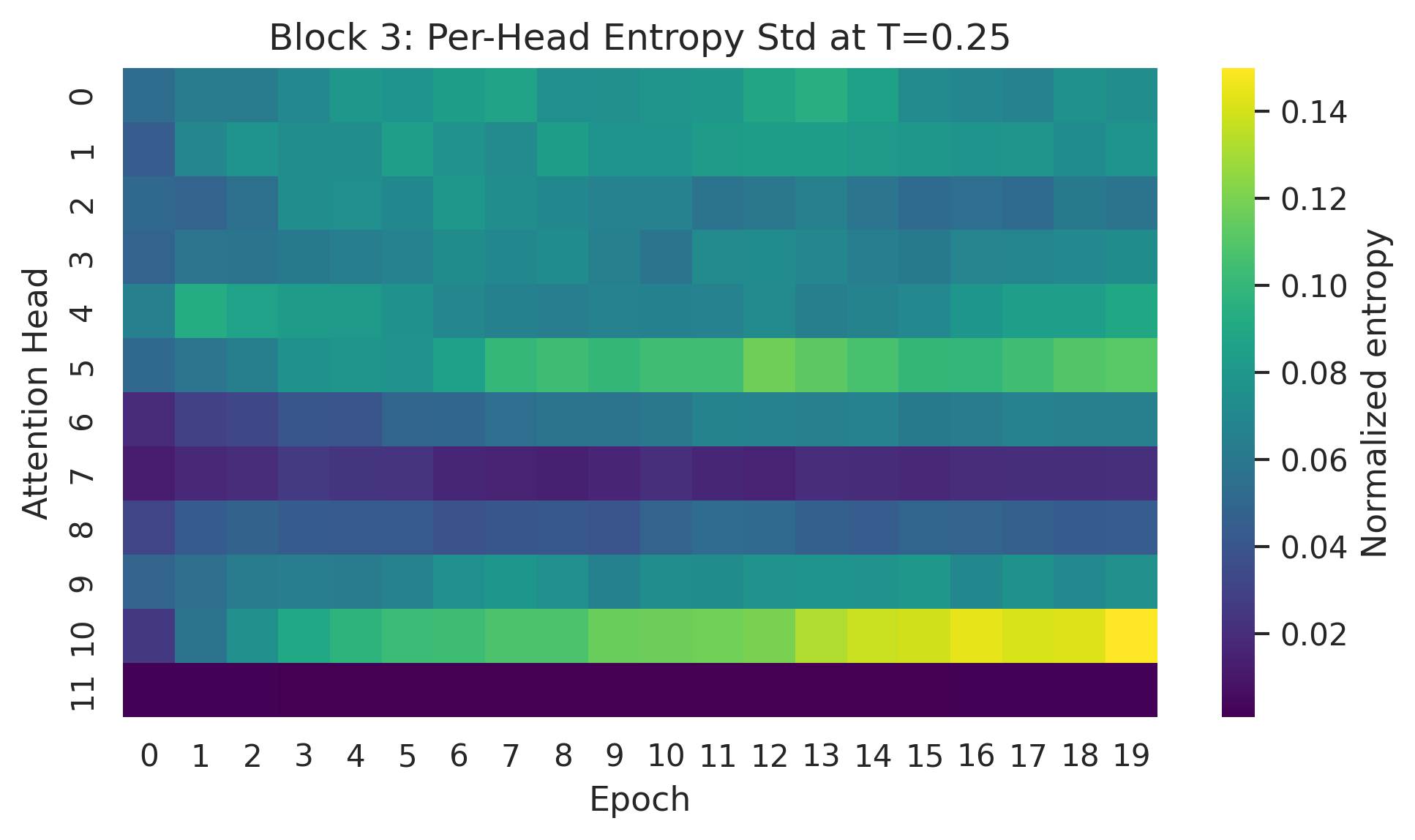}

    \caption{Per-head attention entropy variability across the four SSA blocks during training. Each heatmap visualizes the standard deviation of the normalized attention entropy for each attention head over training epochs, computed from temperature-scaled attention maps ($T=0.25$). Rows correspond to attention heads, columns represent training epochs.}
    \label{fig:per_head_entropy_variability}
\end{figure*}
\begin{table}[tb]
\centering
\caption{Comparison on the ImageNet-200 benchmark.}
\label{tab:imagenet200}
\renewcommand{\arraystretch}{1.15}
\begin{tabular}{l c}
\toprule
\textbf{Method} & \textbf{Top-1 (\%)} \\
\midrule
Hybrid Training~\cite{rathi2020enabling} & 61.48 \\
STBP-tdBN~\cite{zheng2021going} & 63.72 \\
TET~\cite{deng2022temporal} & 63.72 \\
QCFS~\cite{bu2023optimal} & 53.54 \\
Spikformer~\cite{zhou2022spikformer} & 70.24 \\
\midrule
SAGE (Ours) & 60.60 \\
\bottomrule
\end{tabular}
\end{table}

\subsection{Our Results}
Table~\ref{tab:cifar_comparison} and Table~\ref{tab:imagenet200} summarizes the performance of the proposed SAGE framework on CIFAR-10, CIFAR-100, and ImageNet-200, together with representative state-of-the-art SNN and Spiking Transformer methods. Following common practice, the reported results for Hybrid Training~\cite{rathi2020enabling}, DIET-SNN~\cite{rathi2020diet}, STBP-tdBN~\cite{zheng2021going}, TET~\cite{deng2022temporal}, RMP-SNN~\cite{han2020rmp}, QCFS~\cite{bu2023optimal}, and Spikformer~\cite{zhou2022spikformer} are reproduced from the original publications and the recent benchmark study in~\cite{zhou2026advancing}. No re-implementation or retraining of these methods was performed. Our experiments were conducted only on the Spikformer baseline and the proposed SAGE framework using the same training protocol and hyperparameter settings described in Section~\ref{subsec:hyperparms}.

To evaluate the effect of adaptive surrogate-gradient modulation, we additionally compare three surrogate-gradient configurations: Fixed, Learnable, and the proposed SAGE. The Fixed configuration employs the conventional sigmoid surrogate with a constant slope $\alpha=4$ throughout training. The Learnable configuration replaces the fixed slope with a single trainable surrogate-gradient parameter optimized jointly with the network parameters. In contrast, SAGE adaptively modulates the surrogate-gradient slope during training using the proposed uncertainty-driven entropy estimation while leaving the inference model unchanged. To investigate robustness across different temporal resolutions, all three surrogate-gradient configurations were evaluated under simulation time steps $T=\{1,2,4,8,12\}$. The corresponding classification accuracies are presented in Fig.~\ref{fig:surrogate_timestep_comparison}, where each subplot reports the Top-1 accuracy obtained on CIFAR-10, CIFAR-100, and ImageNet-200 for a fixed simulation time step.

\section{Discussion}
\label{sec:discussion}
\paragraph{Performance Analysis}
Table~\ref{tab:cifar_comparison} and Fig.~\ref{fig:surrogate_timestep_comparison} summarize the performance of the proposed SAGE framework under different surrogate-gradient formulations and simulation time steps. Across the evaluated datasets, SAGE consistently achieves competitive or superior classification accuracy compared with both the conventional fixed surrogate ($\alpha=4$) and the learnable surrogate formulation. Under the standard Spikformer setting ($T=4$), SAGE attains the highest Top-1 accuracy on both CIFAR-10 and CIFAR-100 while maintaining competitive performance on ImageNet-200. Furthermore, the temporal analysis across simulation time steps $T={1,2,4,8,12}$ shows that SAGE consistently achieves competitive or superior performance at lower simulation lengths ($T \leq 4$), with the largest improvement observed under the standard Spikformer setting of $T=4$. As the number of simulation time steps increases beyond $T=4$, the performance gap gradually narrows, and SAGE becomes comparable to the learnable surrogate formulation at $T=8$ and $T=12$, indicating that the benefits of uncertainty-guided surrogate adaptation are most pronounced in the low-latency regime. These results suggest that uncertainty-guided surrogate adaptation provides a robust optimization strategy that generalizes across multiple datasets and temporal settings while preserving the original Spikformer architecture.
\paragraph{Understanding SAGE}
The design of SAGE was motivated by the observation that different Spikformer blocks exhibit distinct attention dynamics throughout training. To identify a suitable uncertainty signal, we first analyzed the evolution of per-head attention statistics across training epochs. As illustrated in Fig.~\ref{fig:per_head_entropy_variability}, the standard deviation of the normalized attention entropy consistently revealed clear block-wise and head-wise variations, indicating that different transformer blocks experience varying levels of uncertainty during optimization. We also investigated alternative impurity measures, including Gini impurity~\cite{loh2011classification}, under multiple temperature scaling values ($T_{\mathrm{ent}}=\{0.25,0.5,1.0,2.0\}$), with the corresponding results provided in the Appendix Fig.~\ref{fig:gini_analysis}. While Gini impurity produced nearly saturated and highly uniform responses across blocks and temperatures, exhibiting limited temporal variation, the entropy-based formulation provided richer and more discriminative dynamics throughout training. Furthermore, using the standard deviation of per-head entropy, rather than the mean entropy, captures the dispersion among attention heads, directly reflecting the degree of disagreement within each transformer block. Since SAGE aims to adapt the surrogate gradient according to the consistency of attention behavior rather than its average confidence, entropy dispersion offers a more informative block-level uncertainty estimate for guiding surrogate-gradient adaptation.
\paragraph{Practical Implications}
Beyond the observed accuracy improvements, SAGE offers several practical advantages for spiking transformer optimization. Since the proposed framework operates exclusively during training, the forward inference graph remains identical to the original Spikformer architecture. Consequently, SAGE introduces no additional learnable parameters, preserves the original model size, and incurs no increase in inference latency or computational cost after training. The adaptive controller consists only of lightweight statistical operations, including entropy computation, exponential moving average updates, and a simple nonlinear mapping, contributing an average overhead of approximately $0.03$\,ms per iteration during training. As a result, the proposed method can be seamlessly integrated into existing surrogate-gradient training pipelines as a plug-and-play optimization strategy without requiring architectural modifications or changes to deployment, making it readily applicable to a broad range of SNNs and transformer-based SNN models.
\paragraph{Limitations and Future Work}
Although SAGE demonstrates consistent improvements on Spikformer across multiple image classification benchmarks, the current study focuses on attention-based spiking transformer architectures where uncertainty can be naturally estimated from self-attention distributions. Extending the proposed framework to convolutional SNNs or other non-transformer architectures will require alternative uncertainty measures, such as spike-rate or membrane-potential statistics. Furthermore, the present work adapts only the surrogate-gradient slope while keeping all neuron dynamics unchanged. Future work will investigate jointly adapting additional neuron parameters, including firing thresholds and membrane time constants, as well as learning more general uncertainty-aware controllers that can automatically optimize the surrogate-gradient behavior across different datasets, architectures, and simulation time steps.
\section{Conclusion}
\label{sec:conclusion}
We have presented SAGE, an uncertainty-aware surrogate-gradient adaptation framework for Spiking ViTs. By leveraging the dispersion of attention entropy across self-attention heads, SAGE dynamically modulates the surrogate-gradient slope during training while preserving the original network architecture, model parameters, and inference pipeline. Our experiments on CIFAR-10, CIFAR-100, and ImageNet-200 demonstrated that the proposed approach consistently improves or maintains competitive classification performance over fixed and learnable surrogate-gradient formulations, particularly in low-latency settings. Furthermore, SAGE introduces only negligible training-time overhead and no additional inference cost, making it a practical plug-and-play optimization strategy for spiking transformers. These results highlight the potential of uncertainty-guided surrogate optimization as an effective direction for improving the training of deep SNNs. 

\bibliographystyle{plainnat}
\bibliography{ref}

@article{ghosh2009spiking,
  title={Spiking neural networks},
  author={Ghosh-Dastidar, Samanwoy and Adeli, Hojjat},
  journal={International journal of neural systems},
  volume={19},
  number={04},
  pages={295--308},
  year={2009},
  publisher={World Scientific}
}

@article{tavanaei2019deep,
  title={Deep learning in spiking neural networks},
  author={Tavanaei, Amirhossein and Ghodrati, Masoud and Kheradpisheh, Saeed Reza and Masquelier, Timoth{\'e}e and Maida, Anthony},
  journal={Neural networks},
  volume={111},
  pages={47--63},
  year={2019},
  publisher={Elsevier}
}

@article{lee2016training,
  title={Training deep spiking neural networks using backpropagation},
  author={Lee, Jun H and Delbr{\"u}ck, Tobias and Pfeiffer, Michael},
  journal={Frontiers in neuroscience},
  volume={10},
  pages={508},
  year={2016},
  publisher={Frontiers Media}
}

@article{yao2023attention,
  title={Attention spiking neural networks},
  author={Yao, Man and Zhao, Guangshe and Zhang, Hengyu and Hu, Yifan and Deng, Lei and Tian, Yonghong and Xu, Bo and Li, Guoqi},
  journal={IEEE transactions on pattern analysis and machine intelligence},
  volume={45},
  number={8},
  pages={9393--9410},
  year={2023},
  publisher={IEEE}
}

@article{neftci2019surrogate,
  title={Surrogate gradient learning in spiking neural networks},
  author={Neftci, Emre O and Mostafa, Hesham and Zenke, Friedemann},
  journal={IEEE Signal Processing Magazine},
  volume={36},
  number={6},
  pages={51--63},
  year={2019},
  publisher={IEEE}
}

@inproceedings{yin2024dynamic,
  title={Dynamic spiking graph neural networks},
  author={Yin, Nan and Wang, Mengzhu and Chen, Zhenghan and De Masi, Giulia and Xiong, Huan and Gu, Bin},
  booktitle={Proceedings of the AAAI Conference on Artificial Intelligence},
  volume={38},
  number={15},
  pages={16495--16503},
  year={2024}
}

@inproceedings{zeng2025leveraging,
  title={Leveraging asynchronous spiking neural networks for ultra efficient event-based visual processing},
  author={Zeng, DingYi and Wang, Yuchen and Cao, Honglin and Liu, Wanlong and Xiao, Yichen and Chen, Wenyu and Zhang, Malu and Wang, Guoqing and Yang, Yang and others},
  booktitle={Proceedings of the AAAI Conference on Artificial Intelligence},
  volume={39},
  number={2},
  pages={1620--1628},
  year={2025}
}

@inproceedings{shi2024spikingresformer,
  title={Spikingresformer: Bridging resnet and vision transformer in spiking neural networks},
  author={Shi, Xinyu and Hao, Zecheng and Yu, Zhaofei},
  booktitle={Proceedings of the IEEE/CVF conference on computer vision and pattern recognition},
  pages={5610--5619},
  year={2024}
}

@inproceedings{wang2025spiking,
  title={Spiking vision transformer with saccadic attention},
  author={Wang, Shuai and Zhang, Dehao and Belatreche, Ammar and Xiao, Yichen and Liang, Yu and Shan, Yimeng and Sun, Qian and Zhang, Enqi and Zhang, Malu},
  booktitle={The 13th International Conference on Learning Representations},
  pages={72872--72893},
  year={2025},
  organization={International Conference on Learning Representations (ICLR)}
}

@article{zhou2022spikformer,
  title={Spikformer: When spiking neural network meets transformer},
  author={Zhou, Zhaokun and Zhu, Yuesheng and He, Chao and Wang, Yaowei and Yan, Shuicheng and Tian, Yonghong and Yuan, Li},
  journal={arXiv preprint arXiv:2209.15425},
  year={2022}
}

@article{yu2024spikingvit,
  title={Spikingvit: A multiscale spiking vision transformer model for event-based object detection},
  author={Yu, Lixing and Chen, Hanqi and Wang, Ziming and Zhan, Shaojie and Shao, Jiankun and Liu, Qingjie and Xu, Shu},
  journal={IEEE Transactions on Cognitive and Developmental Systems},
  volume={17},
  number={1},
  pages={130--146},
  year={2024},
  publisher={IEEE}
}

@inproceedings{xu2025hybrid,
  title={Hybrid Spiking Vision Transformer for Object Detection with Event Cameras},
  author={Xu, Qi and Deng, Jie and Shen, Jiangrong and Chen, Biwu and Tang, Huajin and Pan, Gang},
  booktitle={International Conference on Machine Learning},
  pages={69147--69159},
  year={2025},
  organization={PMLR}
}

@inproceedings{datta2025dynamic,
  title={Dynamic spikformer: Low-latency \& energy-efficient spiking neural networks with dynamic time steps for vision transformers},
  author={Datta, Gourav and Liu, Zeyu and Li, Anni and Beerel, Peter A},
  booktitle={ICASSP 2025-2025 IEEE International Conference on Acoustics, Speech and Signal Processing (ICASSP)},
  pages={1--5},
  year={2025},
  organization={IEEE}
}

@InProceedings{Nair_2026_CVPR,
    author    = {Nair, Kiran and Rizk, Rodrigue and Santosh, KC},
    title     = {A Spike-Gated Residual Unit for Information Flow Control in Transformers},
    booktitle = {Proceedings of the IEEE/CVF Conference on Computer Vision and Pattern Recognition (CVPR) Workshops},
    month     = {June},
    year      = {2026},
    pages     = {3469-3478}
}

@article{van2026adaptive,
  title={Adaptive surrogate gradients for sequential reinforcement learning in spiking neural networks},
  author={Van den Berghe, Korneel and Stroobants, Stein and Janapa Reddi, Vijay and De Croon, Guido},
  journal={Advances in Neural Information Processing Systems},
  volume={38},
  pages={147904--147926},
  year={2026}
}

@article{ai2025cross,
  title={A cross-layer residual spiking neural network with adaptive threshold leaky integrate-and-fire neuron and learnable surrogate gradient},
  author={Ai, Qingsong and Yang, Yingnan and Cai, Mincheng and Chen, Kun and Liu, Quan and Ma, Li},
  journal={Knowledge-Based Systems},
  volume={319},
  pages={113575},
  year={2025},
  publisher={Elsevier}
}

@article{hou2026adaptive,
  title={Adaptive and lightweight surrogate gradients: enhancing training efficiency of spiking neural networks},
  author={Hou, Kungjui and Wu, Kunlun and Zhou, Yongcheng},
  journal={Frontiers in Neuroscience},
  volume={20},
  pages={1795946},
  year={2026},
  publisher={Frontiers Media SA}
}

@article{jiang2025adaptive,
  title={Adaptive gradient learning for spiking neural networks by exploiting membrane potential dynamics},
  author={Jiang, Jiaqiang and Wang, Lei and Jiang, Runhao and Fan, Jing and Yan, Rui},
  journal={arXiv preprint arXiv:2505.11863},
  year={2025}
}

@inproceedings{sun2026optimization,
  title={Optimization method for surrogate function in spiking neural networks based on membrane potential distribution},
  author={Sun, Qi and Cao, Zhen and Geng, Kaige and Zhang, Ziyi and Hou, Biao},
  booktitle={Proceedings of the AAAI Conference on Artificial Intelligence},
  volume={40},
  number={30},
  pages={25718--25726},
  year={2026}
}

@inproceedings{jiang2026ds,
  title={Ds-atgo: dual-stage synergistic learning via forward adaptive threshold and backward gradient optimization for spiking neural networks},
  author={Jiang, Jiaqiang and Xu, Wenfeng and Fan, Jing and Yan, Rui},
  booktitle={Proceedings of the AAAI Conference on Artificial Intelligence},
  volume={40},
  number={3},
  pages={1855--1863},
  year={2026}
}

@article{stanojevic2024high,
  title={High-performance deep spiking neural networks with 0.3 spikes per neuron},
  author={Stanojevic, Ana and Wo{\'z}niak, Stanis{\l}aw and Bellec, Guillaume and Cherubini, Giovanni and Pantazi, Angeliki and Gerstner, Wulfram},
  journal={Nature Communications},
  volume={15},
  number={1},
  pages={6793},
  year={2024},
  publisher={Nature Publishing Group UK London}
}

@article{perez2021sparse,
  title={Sparse spiking gradient descent},
  author={Perez-Nieves, Nicolas and Goodman, Dan},
  journal={Advances in Neural Information Processing Systems},
  volume={34},
  pages={11795--11808},
  year={2021}
}

@article{che2022differentiable,
  title={Differentiable hierarchical and surrogate gradient search for spiking neural networks},
  author={Che, Kaiwei and Leng, Luziwei and Zhang, Kaixuan and Zhang, Jianguo and Meng, Qinghu and Cheng, Jie and Guo, Qinghai and Liao, Jianxing},
  journal={Advances in Neural Information Processing Systems},
  volume={35},
  pages={24975--24990},
  year={2022}
}

@inproceedings{wang2023adaptive,
  title={Adaptive smoothing gradient learning for spiking neural networks},
  author={Wang, Ziming and Jiang, Runhao and Lian, Shuang and Yan, Rui and Tang, Huajin},
  booktitle={International conference on machine learning},
  pages={35798--35816},
  year={2023},
  organization={PMLR}
}

@article{guo2024take,
  title={Take a shortcut back: Mitigating the gradient vanishing for training spiking neural networks},
  author={Guo, Yufei and Chen, Yuanpei and Hao, Zecheng and Peng, Weihang and Jie, Zhou and Zhang, Yuhan and Liu, Xiaode and Ma, Zhe},
  journal={Advances in Neural Information Processing Systems},
  volume={37},
  pages={24849--24867},
  year={2024}
}

@inproceedings{branchini2023causal,
  title={Causal entropy optimization},
  author={Branchini, Nicola and Aglietti, Virginia and Dhir, Neil and Damoulas, Theodoros},
  booktitle={International Conference on Artificial Intelligence and Statistics},
  pages={8586--8605},
  year={2023},
  organization={PMLR}
}

@article{cheng2026group,
  title={Group Entropy-Controlled Policy Optimization},
  author={Cheng, Guangran and Lyu, Chengqi and Gao, Songyang and Zhang, Wenwei and Chen, Kai},
  journal={arXiv preprint arXiv:2607.16850},
  year={2026}
}

@inproceedings{qiao2021uncertainty,
  title={Uncertainty-guided model generalization to unseen domains},
  author={Qiao, Fengchun and Peng, Xi},
  booktitle={Proceedings of the IEEE/CVF conference on computer vision and pattern recognition},
  pages={6790--6800},
  year={2021}
}

@inproceedings{lai2024uncertainty,
  title={Uncertainty-guided never-ending learning to drive},
  author={Lai, Lei and Ohn-Bar, Eshed and Arora, Sanjay and Yi, John Seon Keun},
  booktitle={Proceedings of the IEEE/CVF Conference on Computer Vision and Pattern Recognition},
  pages={15088--15098},
  year={2024}
}

@inproceedings{moon2020confidence,
  title={Confidence-aware learning for deep neural networks},
  author={Moon, Jooyoung and Kim, Jihyo and Shin, Younghak and Hwang, Sangheum},
  booktitle={international conference on machine learning},
  pages={7034--7044},
  year={2020},
  organization={PMLR}
}

@article{lv2026confidence,
  title={Confidence-Aware With Prototype Alignment for Partial Multi-label Learning},
  author={Lv, Weijun and Chen, Yu and Fang, Xiaozhao and Zhu, Xuhuan and Wen, Jie and Zhou, Guoxu and Chan, Sixian},
  journal={Advances in Neural Information Processing Systems},
  volume={38},
  pages={170397--170416},
  year={2026}
}

@article{vaswani2017attention,
  title={Attention is all you need},
  author={Vaswani, Ashish and Shazeer, Noam and Parmar, Niki and Uszkoreit, Jakob and Jones, Llion and Gomez, Aidan N and Kaiser, {\L}ukasz and Polosukhin, Illia},
  journal={Advances in neural information processing systems},
  volume={30},
  year={2017}
}

@article{MAASS19971659,
title = {Networks of spiking neurons: The third generation of neural network models},
journal = {Neural Networks},
volume = {10},
number = {9},
pages = {1659-1671},
year = {1997},
issn = {0893-6080},
doi = {https://doi.org/10.1016/S0893-6080(97)00011-7},
url = {https://www.sciencedirect.com/science/article/pii/S0893608097000117},
author = {Wolfgang Maass},
}

@inproceedings{kundu2021hire,
  title={Hire-snn: Harnessing the inherent robustness of energy-efficient deep spiking neural networks by training with crafted input noise},
  author={Kundu, Souvik and Pedram, Massoud and Beerel, Peter A},
  booktitle={Proceedings of the IEEE/CVF international conference on computer vision},
  pages={5209--5218},
  year={2021}
}

@article{roy2019towards,
  title={Towards spike-based machine intelligence with neuromorphic computing},
  author={Roy, Kaushik and Jaiswal, Akhilesh and Panda, Priyadarshini},
  journal={Nature},
  volume={575},
  number={7784},
  pages={607--617},
  year={2019},
  publisher={Nature Publishing Group UK London}
}

@article{wu2018spatio,
  title={Spatio-temporal backpropagation for training high-performance spiking neural networks},
  author={Wu, Yujie and Deng, Lei and Li, Guoqi and Zhu, Jun and Shi, Luping},
  journal={Frontiers in neuroscience},
  volume={12},
  pages={331},
  year={2018},
  publisher={Frontiers Media SA}
}

@inproceedings{han2020rmp,
  title={Rmp-snn: Residual membrane potential neuron for enabling deeper high-accuracy and low-latency spiking neural network},
  author={Han, Bing and Srinivasan, Gopalakrishnan and Roy, Kaushik},
  booktitle={Proceedings of the IEEE/CVF conference on computer vision and pattern recognition},
  pages={13558--13567},
  year={2020}
}

@article{bellec2018long,
  title={Long short-term memory and learning-to-learn in networks of spiking neurons},
  author={Bellec, Guillaume and Salaj, Darjan and Subramoney, Anand and Legenstein, Robert and Maass, Wolfgang},
  journal={Advances in neural information processing systems},
  volume={31},
  year={2018}
}

@article{zhou2026advancing,
  title={Advancing Direct Training for Spiking Neural Networks with Circulate-Firing Neurons and Learnable Gradients},
  author={Zhou, Feifan and Wei, Xiang and Liu, Yang and Yu, Qiang},
  journal={arXiv preprint arXiv:2605.27412},
  year={2026}
}

@inproceedings{horowitz20141,
  title={1.1 computing's energy problem (and what we can do about it)},
  author={Horowitz, Mark},
  booktitle={2014 IEEE international solid-state circuits conference digest of technical papers (ISSCC)},
  pages={10--14},
  year={2014},
  organization={IEEE}
}

@inproceedings{zheng2021going,
  title={Going deeper with directly-trained larger spiking neural networks},
  author={Zheng, Hanle and Wu, Yujie and Deng, Lei and Hu, Yifan and Li, Guoqi},
  booktitle={Proceedings of the AAAI conference on artificial intelligence},
  volume={35},
  number={12},
  pages={11062--11070},
  year={2021}
}

@article{zenke2018superspike,
  title={Superspike: Supervised learning in multilayer spiking neural networks},
  author={Zenke, Friedemann and Ganguli, Surya},
  journal={Neural computation},
  volume={30},
  number={6},
  pages={1514--1541},
  year={2018},
  publisher={MIT Press One Rogers Street, Cambridge, MA 02142-1209, USA journals-info~…}
}

@article{shrestha2018slayer,
  title={Slayer: Spike layer error reassignment in time},
  author={Shrestha, Sumit B and Orchard, Garrick},
  journal={Advances in neural information processing systems},
  volume={31},
  year={2018}
}

@article{gawlikowski2023survey,
  title={A survey of uncertainty in deep neural networks: J. Gawlikowski et al.},
  author={Gawlikowski, Jakob and Tassi, Cedrique Rovile Njieutcheu and Ali, Mohsin and Lee, Jongseok and Humt, Matthias and Feng, Jianxiang and Kruspe, Anna and Triebel, Rudolph and Jung, Peter and Roscher, Ribana and others},
  journal={Artificial intelligence review},
  volume={56},
  number={Suppl 1},
  pages={1513--1589},
  year={2023},
  publisher={Springer}
}

@article{rathi2020enabling,
  title={Enabling deep spiking neural networks with hybrid conversion and spike timing dependent backpropagation},
  author={Rathi, Nitin and Srinivasan, Gopalakrishnan and Panda, Priyadarshini and Roy, Kaushik},
  journal={arXiv preprint arXiv:2005.01807},
  year={2020}
}

@article{rathi2020diet,
  title={Diet-snn: Direct input encoding with leakage and threshold optimization in deep spiking neural networks},
  author={Rathi, Nitin and Roy, Kaushik},
  journal={arXiv preprint arXiv:2008.03658},
  year={2020}
}

@article{deng2022temporal,
  title={Temporal efficient training of spiking neural network via gradient re-weighting},
  author={Deng, Shikuang and Li, Yuhang and Zhang, Shanghang and Gu, Shi},
  journal={arXiv preprint arXiv:2202.11946},
  year={2022}
}

@article{bu2023optimal,
  title={Optimal ANN-SNN conversion for high-accuracy and ultra-low-latency spiking neural networks},
  author={Bu, Tong and Fang, Wei and Ding, Jianhao and Dai, PengLin and Yu, Zhaofei and Huang, Tiejun},
  journal={arXiv preprint arXiv:2303.04347},
  year={2023}
}

@article{krizhevsky2009learning,
  title={Learning multiple layers of features from tiny images},
  author={Krizhevsky, Alex and Hinton, Geoffrey and others},
  year={2009},
  publisher={Toronto, ON, Canada}
}

@inproceedings{deng2009imagenet,
  title={Imagenet: A large-scale hierarchical image database},
  author={Deng, Jia and Dong, Wei and Socher, Richard and Li, Li-Jia and Li, Kai and Fei-Fei, Li},
  booktitle={2009 IEEE conference on computer vision and pattern recognition},
  pages={248--255},
  year={2009},
  organization={Ieee}
}

@article{shannon1948mathematical,
  title={A mathematical theory of communication},
  author={Shannon, Claude Elwood},
  journal={The Bell system technical journal},
  volume={27},
  number={3},
  pages={379--423},
  year={1948},
  publisher={Nokia Bell Labs}
}

@article{loh2011classification,
  title={Classification and regression trees},
  author={Loh, Wei-Yin},
  journal={Wiley interdisciplinary reviews: data mining and knowledge discovery},
  volume={1},
  number={1},
  pages={14--23},
  year={2011},
  publisher={Wiley Online Library}
}

\clearpage
\appendix

\section{Additional Analysis of Uncertainty Measures}
To investigate suitable uncertainty measures for surrogate-gradient adaptation, we compared normalized attention entropy with Gini impurity computed from the attention distributions. Figure~\ref{fig:gini_analysis} illustrates the evolution of Gini impurity across the four Spikformer blocks under different temperature scaling values ($T_{\mathrm{ent}}=\{0.25,0.5,1.0,2.0\}$). Compared with the entropy-based analysis presented in the main paper, Gini impurity exhibits substantially smaller temporal variation and remains nearly saturated throughout training, providing limited discrimination between transformer blocks. Consequently, SAGE adopts the standard deviation of normalized attention entropy as the uncertainty signal for adaptive surrogate-gradient modulation.

\begin{figure}[H]
\centering

\includegraphics[width=0.48\textwidth]{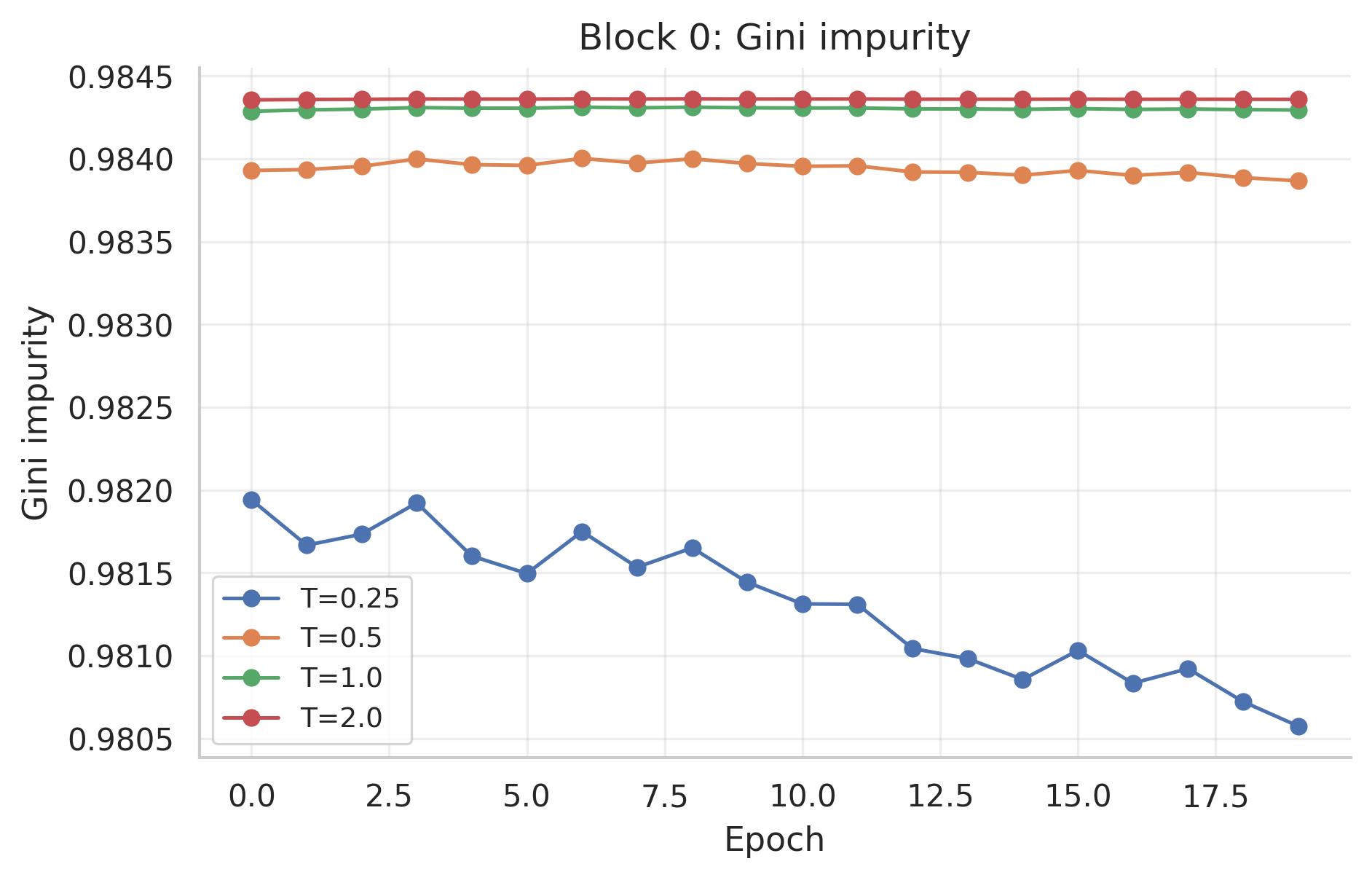}
\hfill
\includegraphics[width=0.48\textwidth]{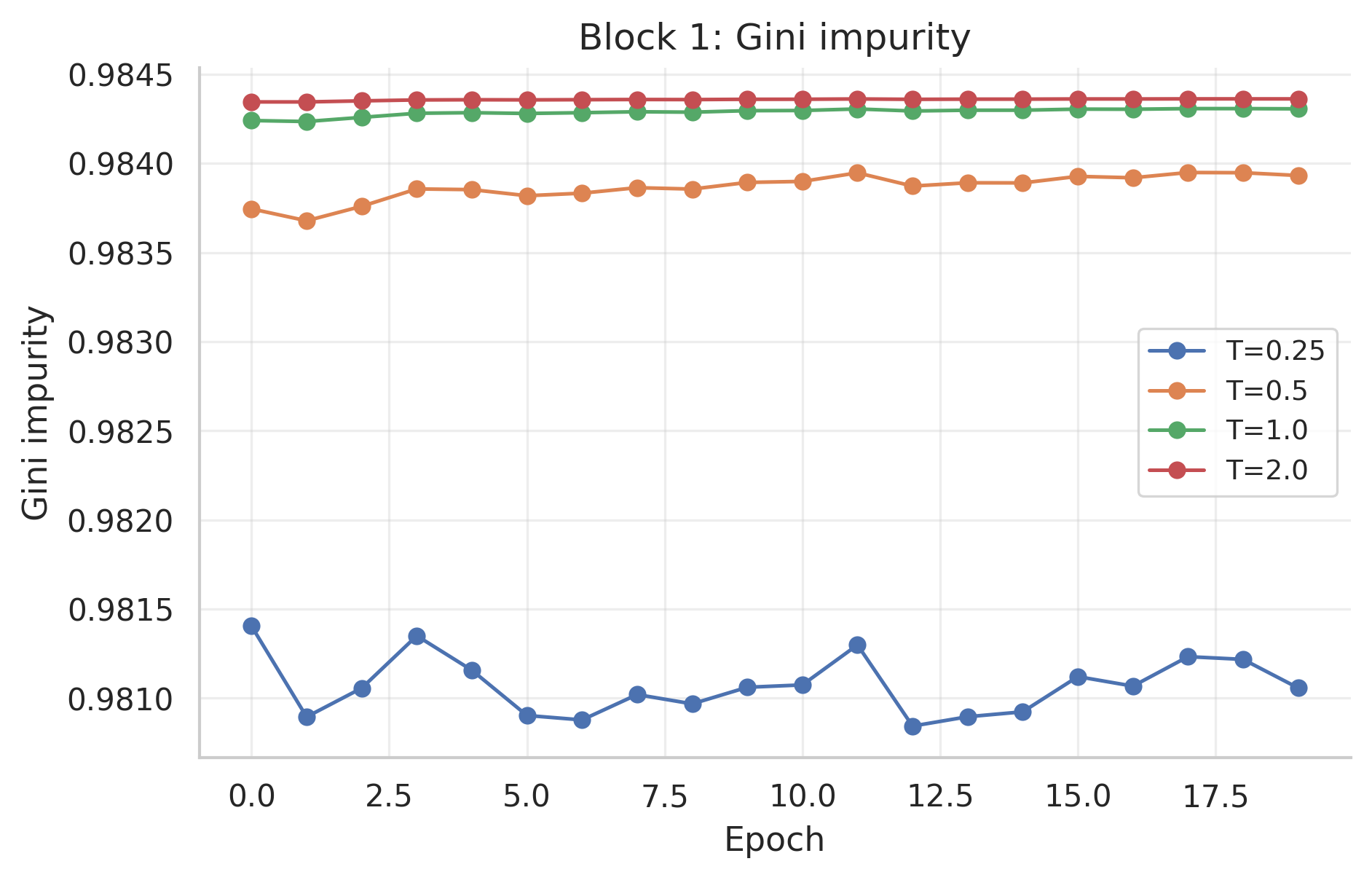}

\vspace{0.3em}

\includegraphics[width=0.48\textwidth]{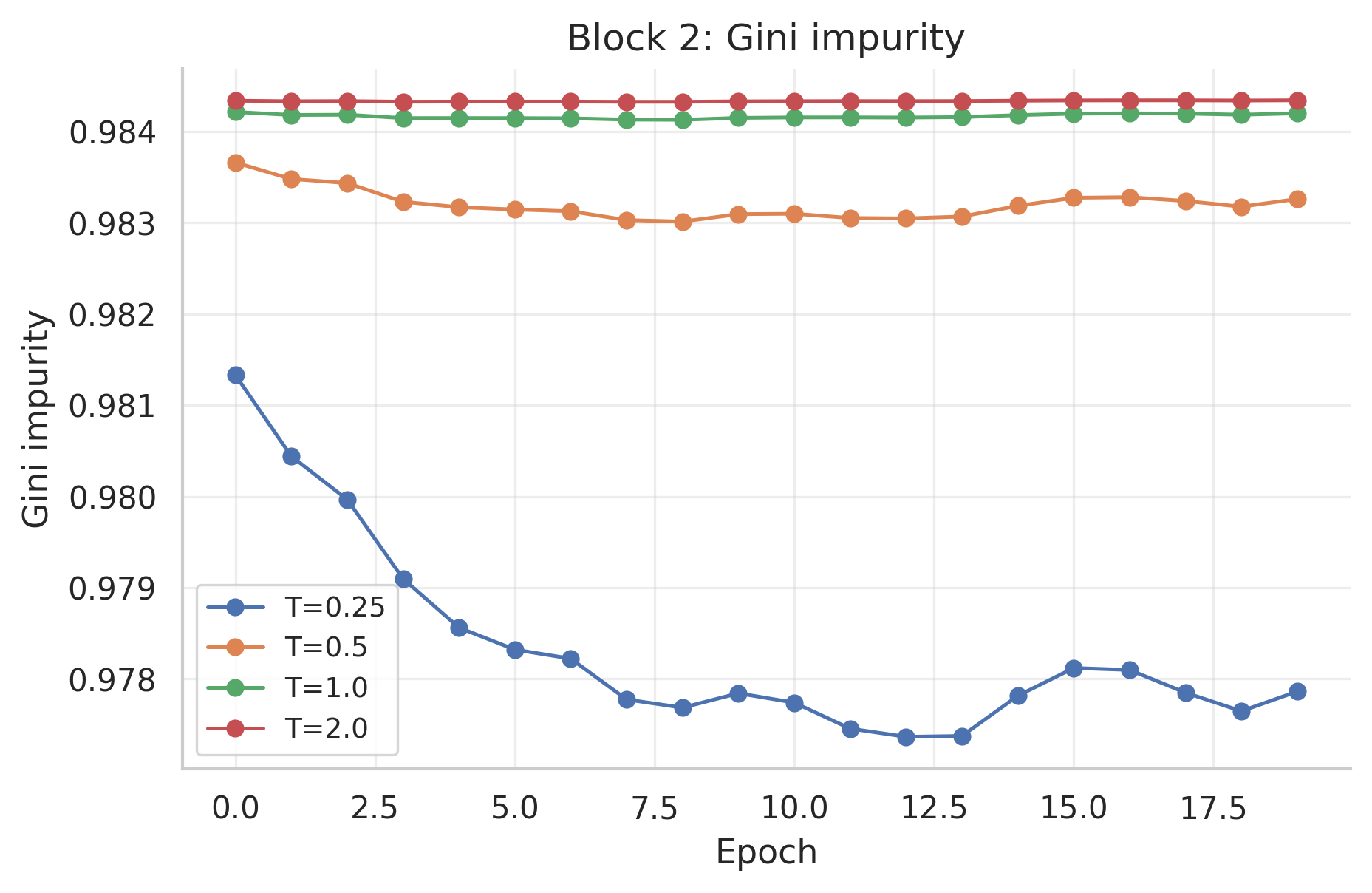}
\hfill
\includegraphics[width=0.48\textwidth]{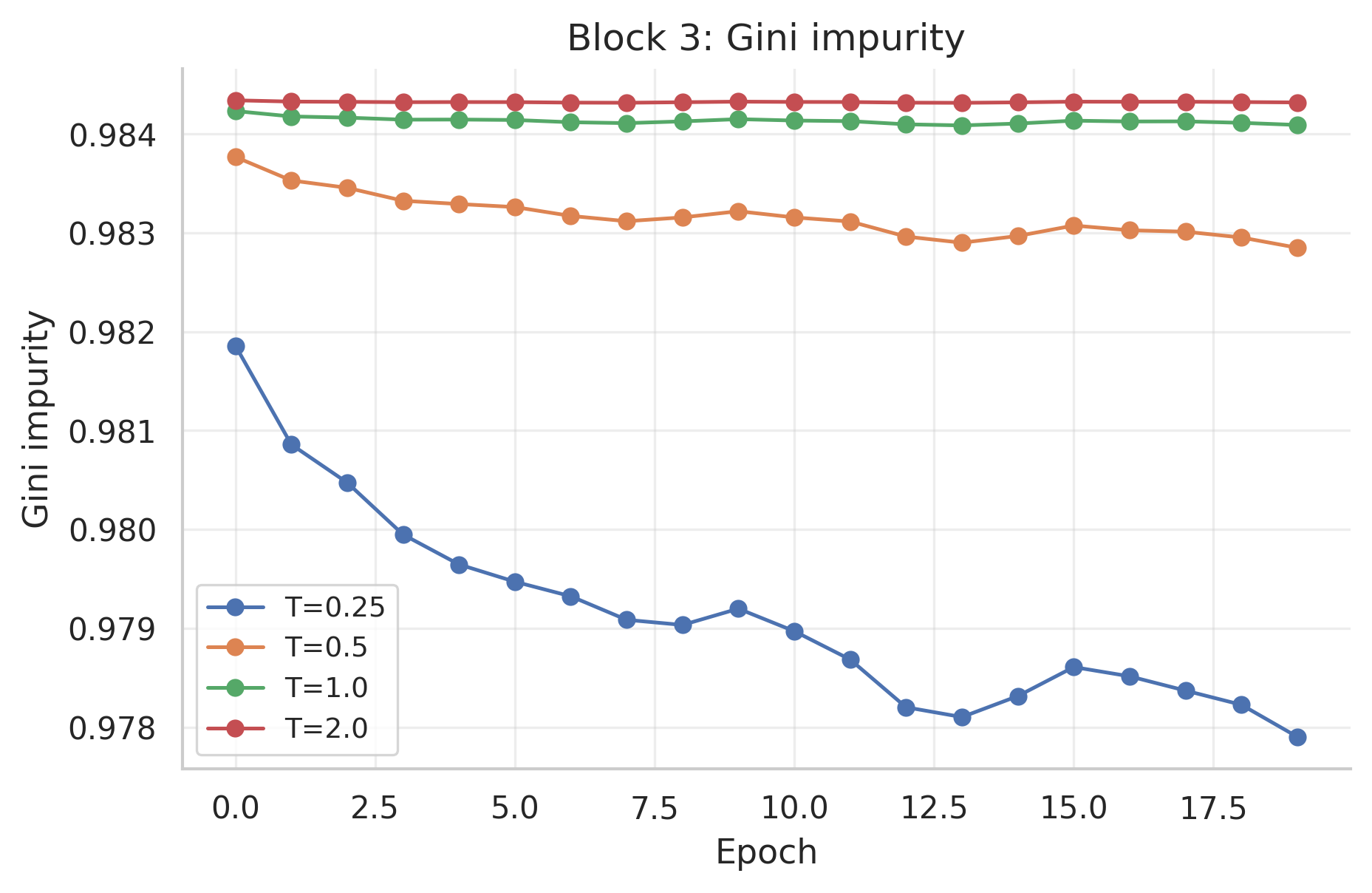}

\caption{Evolution of Gini impurity across the four Spikformer transformer blocks during training under different attention-temperature values ($T_{\mathrm{ent}}=0.25$, $0.5$, $1.0$, and $2.0$). Across all blocks, Gini impurity remains highly saturated with only minor temporal fluctuations, indicating limited sensitivity to changes in attention uncertainty.}

\label{fig:gini_analysis}
\end{figure}

To determine an appropriate entropy temperature for SAGE, we performed a one-factor-at-a-time (OFAT) sensitivity analysis by evaluating the evolution of the standard deviation of normalized attention entropy under four temperature values ($T_{\mathrm{ent}}=\{0.25,0.5,1.0,2.0\}$). Figure~\ref{fig:temperature_analysis} summarizes the results across the four Spikformer transformer blocks. Lower entropy temperatures preserve substantially greater entropy dispersion, providing a richer uncertainty signal for the adaptive controller, whereas larger temperatures progressively smooth the attention distribution and reduce the available variability. Consequently, $T_{\mathrm{ent}}=0.25$ was selected for all experiments.
\begin{figure*}[!t]
\centering

\includegraphics[width=0.48\textwidth]{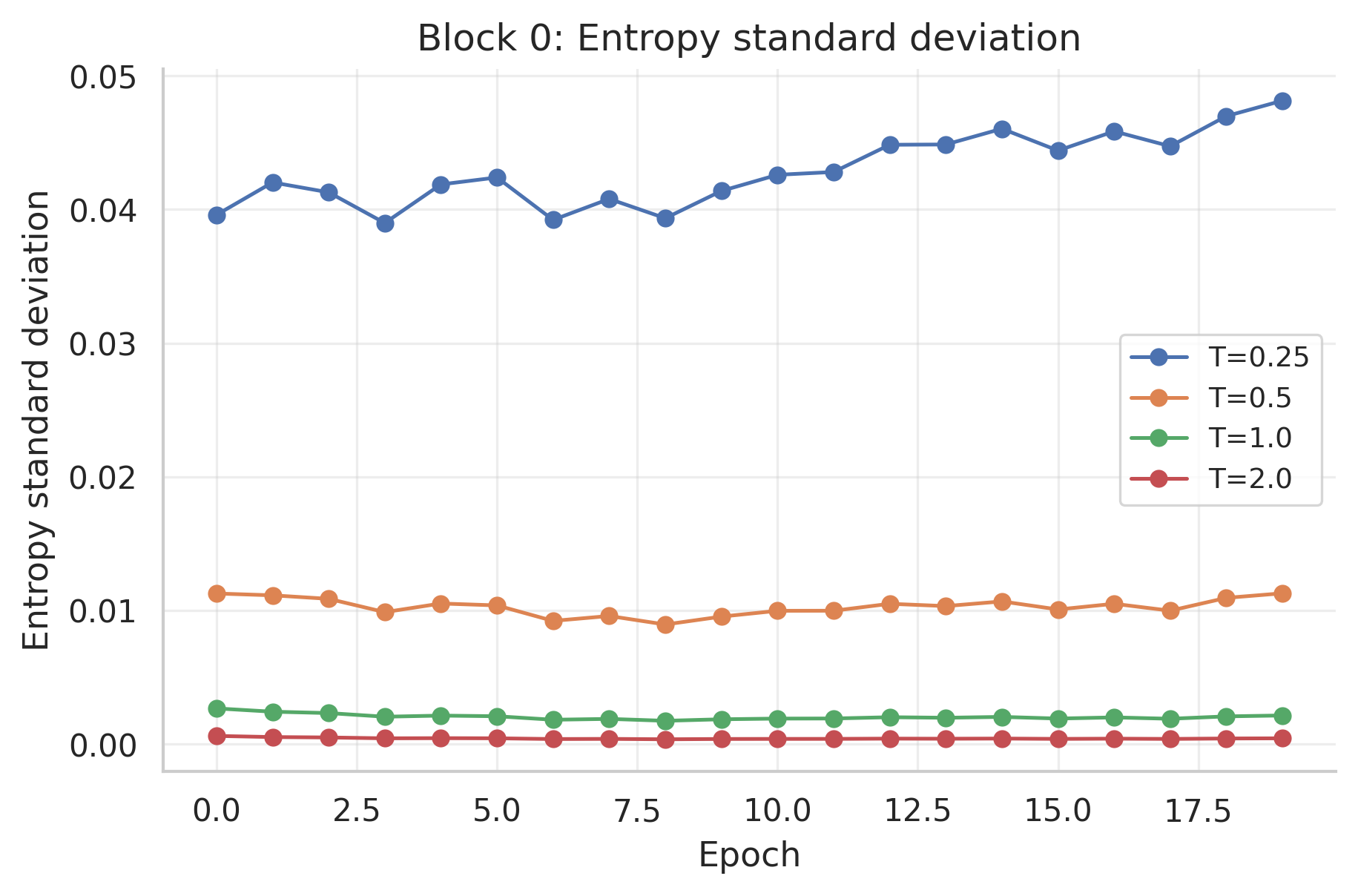}
\hfill
\includegraphics[width=0.48\textwidth]{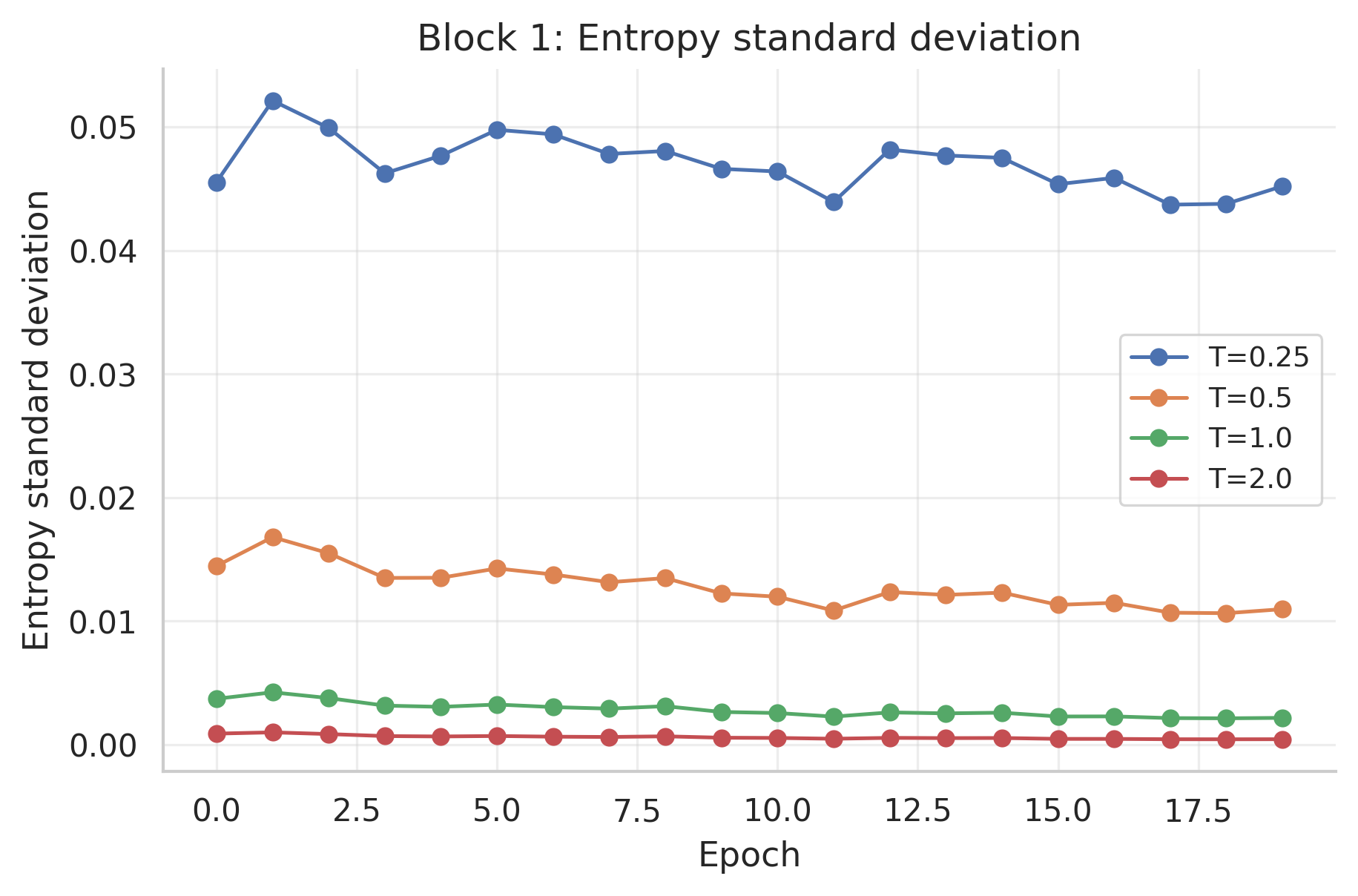}

\vspace{0.3em}

\includegraphics[width=0.48\textwidth]{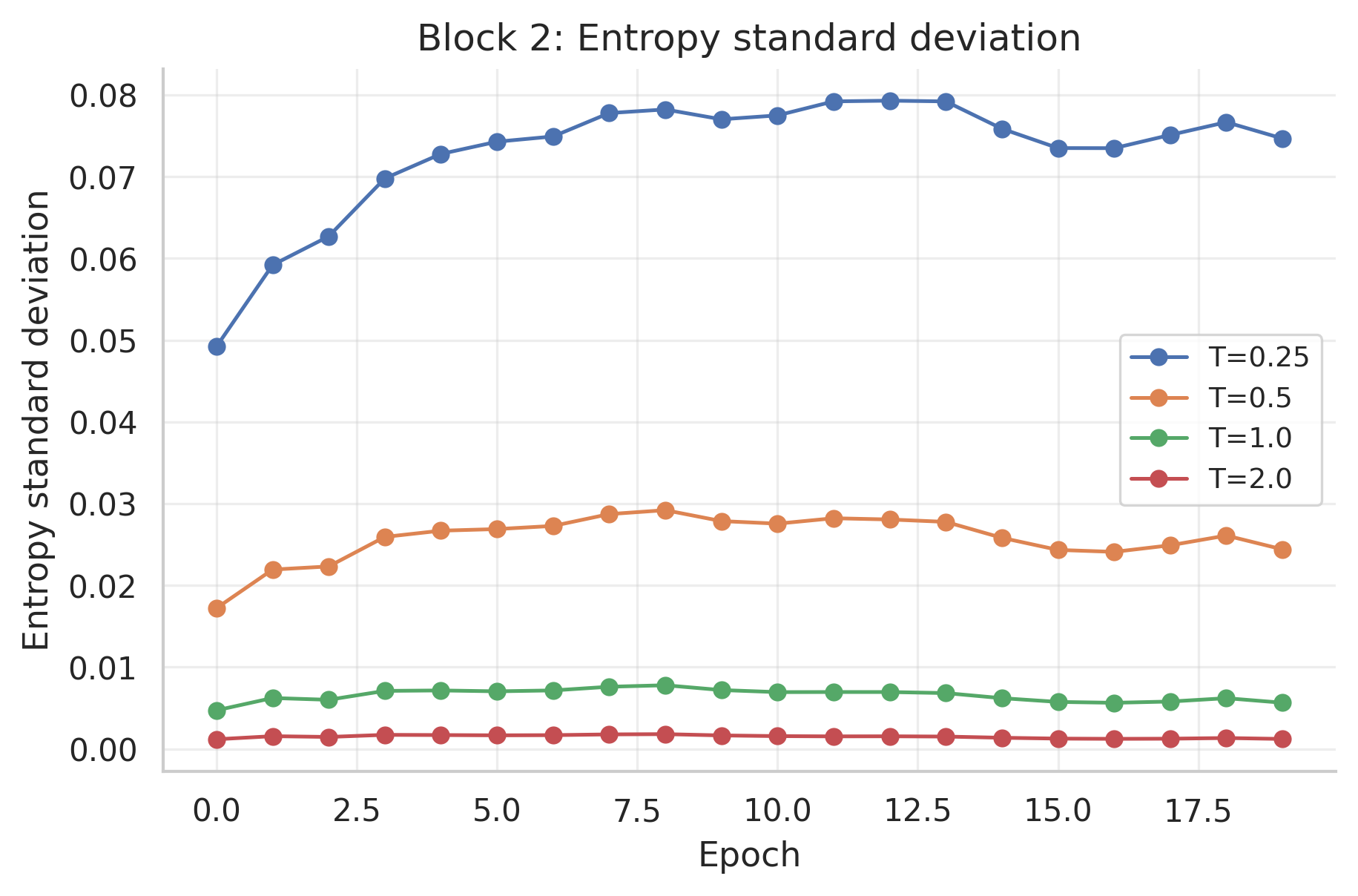}
\hfill
\includegraphics[width=0.48\textwidth]{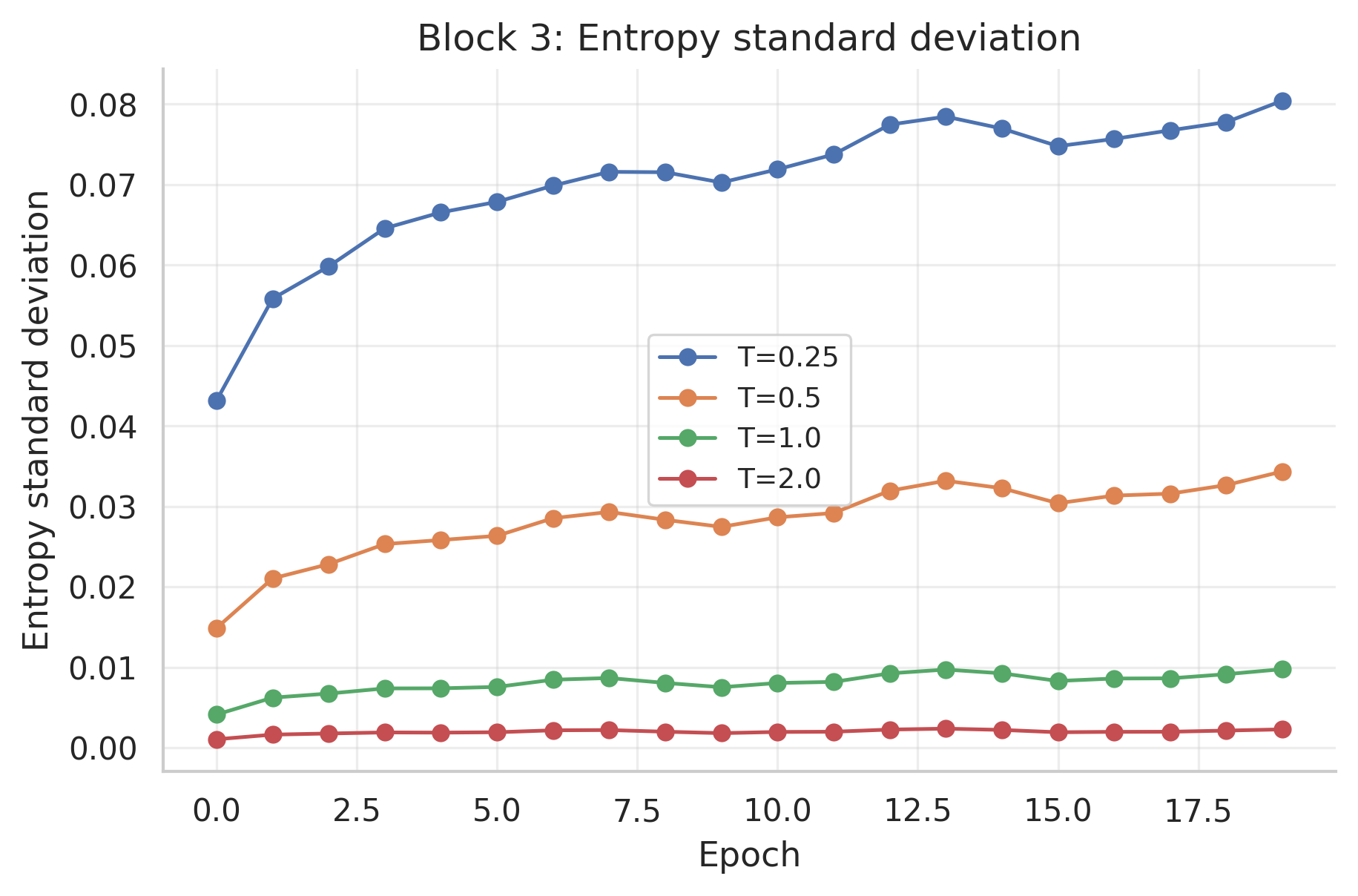}

\caption{Sensitivity analysis of the entropy temperature. Evolution of the standard deviation of normalized attention entropy across the four Spikformer blocks during training under different entropy temperatures ($T_{\mathrm{ent}}=0.25$, $0.5$, $1.0$, and $2.0$). Based on this analysis, $T_{\mathrm{ent}}=0.25$ was adopted throughout all experiments.}

\label{fig:temperature_analysis}
\end{figure*}

\begin{algorithm}[t]
\caption{SAGE Training for Spikformer}
\label{alg:umsen}
\scriptsize
\begin{algorithmic}[1]
\REQUIRE Training set $\mathcal{D}$, Spikformer model $f_{\theta}$ with $B$ SSA blocks, baseline surrogate slope $\alpha_0=4.0$
\REQUIRE Entropy temperature $\tau_e=0.25$, EMA factor $\beta=0.95$, dead-zone threshold $\delta=0.25$
\REQUIRE Slope bounds $\alpha_{\min}=3.0$, $\alpha_{\max}=5.0$
\STATE Initialize one controller per SSA block: $\mathcal{C}_1,\ldots,\mathcal{C}_B$
\STATE Initialize EMA dispersion $m_b$, running mean $\mu_b$, running variance $\sigma_b^2$, and slope $\alpha_b=\alpha_0$ for each block $b$
\FOR{each epoch $e=1,\ldots,E$}
    \FOR{each mini-batch $(x,y)\in\mathcal{D}$}
        \IF{$e=1$ or training step is in warm-up}
            \STATE Set $\alpha_b \leftarrow \alpha_0$ for all blocks
        \ENDIF

        \STATE Apply current block-wise surrogate slopes $\{\alpha_b\}_{b=1}^{B}$ to all LIF nodes inside each transformer block
        \STATE Forward pass through Spikformer:
        \[
            \hat{y}=f_{\theta}(x)
        \]

        \FOR{each SSA block $b=1,\ldots,B$}
            \STATE Extract detached raw attention scores:
            \[
                A_b = (Q_bK_b^\top)\cdot s
            \]
            \STATE Compute temperature-scaled attention probabilities:
            \[
                P_b=\operatorname{softmax}\left(\frac{A_b}{\tau_e}\right)
            \]
            \STATE Compute normalized attention entropy:
            \[
                H_b=-\frac{1}{\log N}\sum_{j=1}^{N}P_{b,j}\log(P_{b,j}+\epsilon)
            \]
            \STATE Average entropy over time, batch, and token dimensions to obtain per-head entropy:
            \[
                h_b=\operatorname{mean}_{T,B,N}(H_b)
            \]
            \STATE Compute raw entropy dispersion across heads:
            \[
                d_b=\operatorname{std}(h_b)
            \]
            \STATE Update EMA dispersion:
            \[
                m_b \leftarrow \beta m_b + (1-\beta)d_b
            \]
            \STATE Update running mean and variance of $m_b$
            \STATE Compute temporal normalized uncertainty:
            \[
                z_b=\frac{m_b-\mu_b}{\sigma_b+\epsilon}
            \]
        \ENDFOR

        \STATE Center normalized uncertainty across blocks:
        \[
            \tilde{z}_b=z_b-\frac{1}{B}\sum_{k=1}^{B}z_k
        \]

        \FOR{each block $b=1,\ldots,B$}
            \IF{$|\tilde{z}_b| < \delta$}
                \STATE $\alpha_b \leftarrow \alpha_0$
            \ELSE
                \STATE $\alpha_b \leftarrow \alpha_0 + 0.5\tanh(\tilde{z}_b)$
            \ENDIF
            \STATE Clamp:
            \[
                \alpha_b \leftarrow \operatorname{clip}(\alpha_b,\alpha_{\min},\alpha_{\max})
            \]
        \ENDFOR

        \STATE Compute loss $\mathcal{L}(\hat{y},y)$ using the official training criterion
        \STATE Backpropagate using surrogate gradients parameterized by the active $\alpha_b$
        \STATE Update model parameters $\theta$ with the optimizer
        \STATE Reset spiking neuron states
    \ENDFOR
\ENDFOR
\end{algorithmic}
\end{algorithm}

\end{document}